\documentclass[11pt,letterpaper]{article}
\usepackage[T1]{fontenc}
\usepackage[utf8]{inputenc}
\usepackage[margin=1in]{geometry}
\usepackage{times}
\usepackage{microtype}
\usepackage[authoryear,round]{natbib}
\setcitestyle{citesep={;},aysep={,},yysep={;}}
\usepackage{xcolor}
\usepackage{xurl}
\usepackage{graphicx}

\usepackage{amsmath,amsfonts,bm}

\def\eqref#1{equation~\ref{#1}}

\def\1{\bm{1}}

\DeclareMathAlphabet{\mathsfit}{\encodingdefault}{\sfdefault}{m}{sl}
\SetMathAlphabet{\mathsfit}{bold}{\encodingdefault}{\sfdefault}{bx}{n}

\newcommand{\KL}{D_{\mathrm{KL}}}

\usepackage{amsmath,amsthm}
\newtheorem{theorem}{Theorem}

\newtheorem{proposition}{Proposition}
\newtheorem{definition}{Definition}
\newtheorem{remark}{Remark}
\newtheorem{assumption}{Assumption}
\newtheorem{lemma}{Lemma}
\usepackage{booktabs}
\usepackage{multirow}
\usepackage{float}
\newcommand{\enc}{\operatorname{enc}}

\newcommand{\TC}{\operatorname{TC}}
\newcommand{\Rb}{\mathbb{R}}

\newcommand{\Ncal}{\mathcal{N}}
\newcommand{\Lcal}{\mathcal{L}}
\newcommand{\zvec}{\bm{z}}
\newcommand{\ovec}{\bm{o}}
\newcommand{\avec}{\bm{a}}
\newcommand{\uvec}{\bm{u}}

\newcommand{\SIGReg}{\operatorname{SIGReg}}

\usepackage[hidelinks]{hyperref}
\hypersetup{
  pdftitle={LRC-JEPA: Disentangling Dynamics and Residual Context for Efficient World Models},
  pdfauthor={Luzhe Huang, Lei Chu, Jingyi Liang, Yuhuan Zhao}
}

\title{LRC-JEPA: Disentangling Dynamics and Residual Context for Efficient World Models}

\author{%
  \normalsize Luzhe Huang\textsuperscript{1}\quad
  \normalsize Lei Chu\textsuperscript{2}\quad
  \normalsize Jingyi Liang\textsuperscript{1}\quad
  \normalsize Yuhuan Zhao\textsuperscript{1}\\[0.5em]
  \small \textsuperscript{1}Independent researcher\qquad
  \small \textsuperscript{2}USC
}
\date{}

\begin{document}

\maketitle

\begin{abstract}
 Compact JEPA world models enable efficient latent-space planning, but
low-dimensional representation trained under reward-free self-supervision must encode both action-conditioned dynamics and
predictable visual context. This competition can entangle controllable state
with high-rank nuisance appearance and degrade planning as scenes become more complex. We
introduce LRC-JEPA, a lightweight end-to-end world model that routes information
into a compact predictive latent $\mathbf{z}$ and learned-query residual-context
embeddings $\mathbf{u}$. Only $\mathbf{z}$ is propagated by the dynamics model
and used for planning, while $\mathbf{u}$ captures temporally persistent
information for cross-attention reconstruction; a differentiable residual
connection encourages the latent to retain complementary dynamic content.
Under explicit assumptions, we show that the resulting representation is
sufficient, minimal, nuisance-invariant, and disentangled. Across four simulated
control environments, LRC-JEPA improves average planning success over a
parameter-matched JEPA baseline by 9 percentage points and matches or exceeds
substantially larger pretrained models. On the real-world
Bridge-v2 set, its 5.5M-parameter active encoder outperforms DINO-WM (22.1M) and V-JEPA2 (303.9M) encoders
while also enabling faster planning. Physical-state probes, reconstruction
interventions, and ablations confirm the effectiveness of LRC-JEPA's representation disentanglement.
\end{abstract}

\section{Introduction}
\label{sec:introduction}

Learning an internal model of the physical world is a central objective of embodied intelligence. World models infer state from high-dimensional observations, predict the effects of candidate actions, and support planning toward unseen goals. Learning them from offline, reward-free interaction is attractive because the same trajectories can serve multiple tasks without reward annotation, online exploration, or policy retraining \cite{ha2018worldmodels,sobal2025pldm}. Visual world models have consequently shifted from pixel prediction \cite{finn2016unsupervised,ebert2018visualforesight} toward dynamics in compressed representation spaces \cite{hafner2019planet,hafner2023dreamerv3,hansen2024tdmpc2}, enabling policy learning or efficient test-time optimization.
One line of latent world models predicts rich patch-level features from large pretrained encoders, as in DINO-WM and V-JEPA2 \cite{zhou2024dinowm,assran2025vjepa2}. These features retain fine-grained semantic and spatial information but make repeated planning expensive \cite{maes2026leworldmodel}. Compact alternatives jointly learn dynamics over one or a few scene embeddings \cite{hafner2019planet,sobal2025pldm,maes2026leworldmodel}; LeWorldModel (LWM), for example, predicts a single latent state and avoids pixel reconstruction. However, this efficiency creates a \emph{capacity--compactness trade-off}: low-dimensional encoders can discard action-relevant transition information or entangle controllable factors with predictable but weakly action-dependent context \cite{pan2022isodream,cui2026generalization}. The problem is acute in reward-free image-goal planning, where future goals may depend on context that cannot be discarded a priori. Figure~\ref{fig:capacity_complexity_tradeoff} exposes this failure mode: a single-latent JEPA degrades under PCA compression and task-irrelevant visual perturbations, indicating entanglement of nuisance with control-relevant dynamics and inefficient use of its limited capacity.

Motivated by optimal representation theory \cite{achille2018emergence,achille2018information}, we propose \textbf{LRC-JEPA}, a lightweight end-to-end JEPA world model with two functionally distinct representations: a compact predictive latent $\mathbf{z}_t$ and learned-query residual-context embeddings $\mathbf{u}_t$. Our contributions are:
\begin{itemize}
    \item We introduce a functional routing architecture that assigns predictive dynamics to compact $\mathbf{z}$ and slow-varying or persistent context to complementary $\mathbf{u}$.
    \item Under explicit assumptions, we show that LRC-JEPA satisfies sufficiency, minimality, nuisance invariance, and coordinate disentanglement, with $\mathbf{u}_t$ and $\mathbf{z}_t$ sufficient for context and dynamic state, respectively.
    \item Across simulated and real-world settings, LRC-JEPA's 5.5M active parameter encoder improves over a parameter-matched JEPA and surpasses up to 55$\times$ larger pretrained competitors on key planning metrics and planning time consumption. Probes and interventions confirm the intended allocation of kinematics to $\mathbf{z}$ and residual context to $\mathbf{u}$.
\end{itemize}

\section{Related Work}
\label{sec:related_work}

\paragraph{Visual world models and model-based control.}
World models predict action consequences for planning and policy learning \cite{ha2018worldmodels}. Early visual methods predict future pixels for robotic control \cite{finn2016unsupervised,ebert2018visualforesight}, while recent foundation models scale video generation to broad physical-AI data \cite{nvidia2025cosmos}. Such models preserve visual detail but must also spend substantial capacity on appearance variation that may be irrelevant to a particular control objective. Latent approaches instead support planning or policy learning through compressed states \cite{hafner2019planet,hafner2023dreamerv3,hansen2022tdmpc,hansen2024tdmpc2}. PlaNet and Dreamer combine latent dynamics with observation and reward prediction, whereas TD-MPC methods emphasize task-oriented latent dynamics, reward prediction, and value estimation. These objectives provide signals for retaining task-relevant information. Reward-free visual planning poses a different challenge: the representation must preserve action-dependent state and distinguish image goals without reward supervision. LRC-JEPA addresses this setting by learning a compact predictive state alongside a separate context representation used for reconstruction during training.

\paragraph{Joint-embedding predictive architecture (JEPA).}
JEPAs learn by predicting target representations rather than reconstructing every input detail. I-JEPA introduces masked representation prediction for images \cite{assran2023ijepa}, and V-JEPA extends this approach to video \cite{bardes2024vjepa}. Existing methods prevent collapse with asymmetric or stop-gradient target encoders. LeJEPA instead uses teacher-free distributional regularization \cite{balestriero2025lejepa}, which LWM applies to compact action-conditioned dynamics learned from pixels \cite{maes2026leworldmodel}. Alternatively, DINO-WM and V-JEPA2 retain much richer pretrained features \cite{oquab2024dinov2,zhou2024dinowm,assran2025vjepa2}. These methods successfully establish feature prediction for control. However, preventing collapse does not inherently determine which physical variables a predictive representation preserves.

\paragraph{Functional separation in representation learning.}
Separating appearance and dynamics has a long history in video generation and prediction \cite{denton2017disentangled,tulyakov2018mocogan,hsieh2018ddpae}; MotionRNN further distinguishes motion trends from transient variation \cite{wu2021motionrnn}. MC-JEPA jointly learns motion and content features through optical-flow and self-supervised representation objectives \cite{bardes2023mcjepa}. For control, bisimulation uses reward and transition equivalence to suppress irrelevant variation \cite{zhang2021bisimulation}. Without rewards, predictive objectives can favor slowly varying distractors over changing task state \cite{sobal2022slowfeatures}. Recent work addresses related allocation problems through progression--content subspaces \cite{thil2026subspacejepa} or Gaussian regularization of temporally centered residuals \cite{liu2026temporallycentered}. LRC-JEPA uses a different division of responsibility: action-conditioned prediction shapes $\mathbf{z}$, while reconstruction and temporal invariance encourage $\mathbf{u}$ to retain complementary, slowly varying context. This is a functional bias rather than a prescribed semantic partition; slowly varying quantities can still matter for control. The context stream and decoder support representation learning, while planning operates only on $\mathbf{z}$, connecting the separation directly to inference efficiency.

\section{Method}
\label{sec:method}

We first use optimal representation theory to identify a limitation of single-latent JEPAs, then present LRC-JEPA and establish its minimality and sufficiency properties.

\subsection{Theoretical Analysis Foundations}

\begin{figure*}[t]
    \centering
    \includegraphics[width=\textwidth]
        {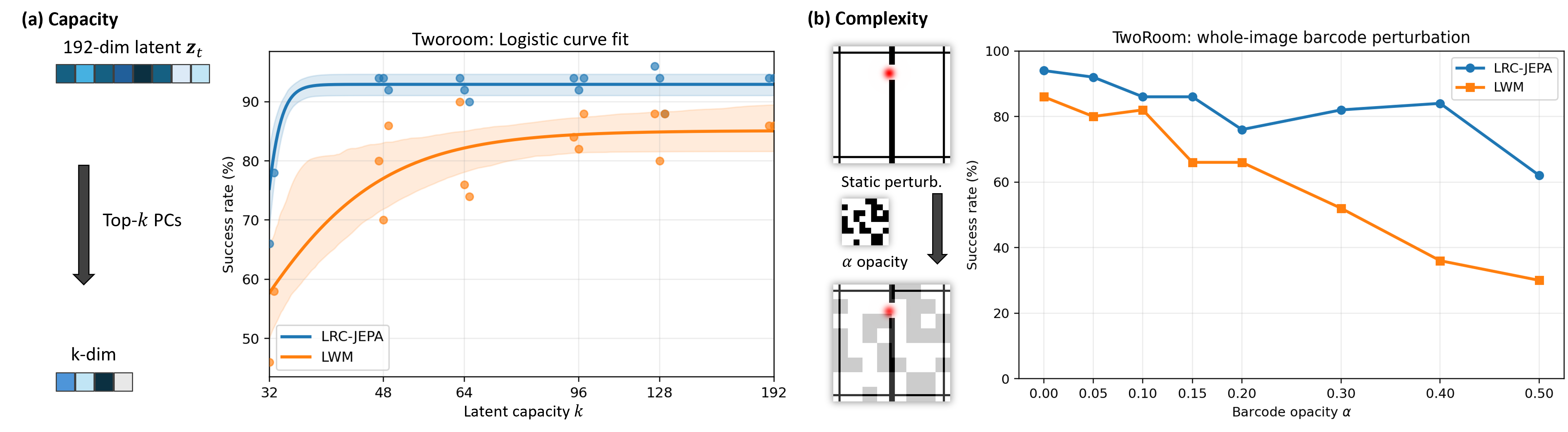}
    \caption{
        \textbf{Zero-shot capacity--complexity stress tests for compact
        JEPA planning.}
        \emph{Left:} Starting from the original $192$-dimensional
        predictive latent, PCA is fitted on clean representations and
        latents are projected onto top-$k$ components during planning.
        \emph{Right:} In TwoRoom, a trajectory-persistent grayscale
        pattern is blended into both current and goal observations with
        opacity $\alpha$, where $\alpha=0$ denotes the clean environment. Details are provided in Appendix \ref{sec:trade-off}.
    }
    \label{fig:capacity_complexity_tradeoff}
\end{figure*}

We adopt the Achille--Soatto criteria of sufficiency, minimality, invariance, and disentanglement \cite{achille2018information}; definitions appear in Appendix~\ref{app:defs}.

\begin{theorem}[Achille--Soatto criterion;
{\citealp[Prop.~3.1 and \S5]{achille2018emergence}}]
\label{thm:criterion}
Let $Y$ be a task, $N$ a nuisance ($I(Y;N)=0$), and $Z$ a sufficient
representation of the input $X$ such that the mutual information $I(Y;Z)=I(Y;X)$, with $N\to X\to Z$.
(a)~\emph{Sufficiency, minimality $\Rightarrow$ Invariance:} every nuisance satisfies
$I(Z;N)\le I(Z;X)-I(X;Y)$ (Remark~\ref{rem:residual}); since $I(X;Y)$ is constant on given data, a
sufficient $Z$ is maximally invariant w.r.t $N$ if $I(Z;X)$ minimizes.
(b)~\emph{Minimality $\Rightarrow$ coordinate disentanglement:} For a stochastic-network
model $Z=g_z(X)$, the sum of mutual information and total correlation (TC) is upper bounded by the information of training data, so a weight-information regularization enforcing minimality drives the total correlation $\TC(Z)$ down with it.
Thus, invariance and coordinate disentanglement follow one mechanism: a penalty that
decreases $I(Z;X)$ while sufficiency is maintained.
\end{theorem}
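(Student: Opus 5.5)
Part (a) is a chain-rule computation on the Markov chain $(Y,N)\to X\to Z$, and part (b) imports the PAC-Bayes/weight-information bound of \citet{achille2018emergence}.

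\textbf{Part (a).} I will use the chain rule for mutual information in two ways. First, since $Z$ is a function of $X$ (possibly with independent noise), we have $I(Z;X,Y,N)=I(Z;X)$. Hence
\[
I(Z;X)\;\ge\; I(Z;Y,N)\;=\;I(Z;Y)+I(Z;N\mid Y).
\]
Second, I lower bound $I(Z;N\mid Y)$. Expanding $I(Z;Y,N)$ in the other order gives $I(Z;N)+I(Z;Y\mid N)$, so
\[
I(Z;N\mid Y)=I(Z;N)+I(Z;Y\mid N)-I(Z;Y).
\]
This identity is not immediately useful on its own. The cleaner route is the interaction-information identity
\[
I(Z;N\mid Y)-I(Z;N)=I(Y;N\mid Z)-I(Y;N).
\]
With $I(Y;N)=0$ and $I(Y;N\mid Z)\ge 0$, this gives $I(Z;N\mid Y)\ge I(Z;N)$. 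Combining the two steps,
\[
I(Z;N)\le I(Z;X)-I(Z;Y)=I(Z;X)-I(X;Y),
\]
where the last equality is sufficiency. The invariance claim then follows by monotonicity. The subtracted term $I(X;Y)$ is fixed by the data distribution, so minimizing $I(Z;X)$ over sufficient $Z$ minimizes an upper bound on $I(Z;N)$ that holds uniformly for every nuisance.

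\textbf{Part (b).} This part is the main obstacle, because it is not a purely information-theoretic identity. I would cite \citet[\S5]{achille2018emergence} rather than reprove it. The plan is to model $Z=g_z(X)$ as a stochastic network whose weights $w$ are drawn from a posterior $q(w\mid\mathcal D)$. Their result bounds
\[
I(Z;X)+\TC(Z)\;\le\; g\bigl(I(w;\mathcal D)\bigr)
\]
for a monotone function $g$, under their assumptions of small weight noise and a factorized (multiplicative-noise) layer structure. Here $I(w;\mathcal D)$ is the information in the weights about the training set. I would restate those assumptions explicitly and note that $\TC(Z)=\sum_i H(Z_i)-H(Z)\ge 0$.

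With that bound in hand, a weight-information penalty that drives $I(w;\mathcal D)$ down forces the sum $I(Z;X)+\TC(Z)$ down. Since both summands are nonnegative, $\TC(Z)$ itself is driven down, provided sufficiency is kept active by the task loss.

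\textbf{Closing remark.} The final sentence of the theorem follows by combining (a) and (b). A single pressure toward small $I(Z;X)$, taken under a sufficiency constraint, simultaneously tightens the nuisance bound and the total-correlation bound. I expect the delicate points to be two. The first is checking that the cited bound's hypotheses are the ones intended. The second is being honest that (b) bounds $\TC$ only through a proxy, namely weight information, rather than through $I(Z;X)$ directly.
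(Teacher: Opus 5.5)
Your proposal is correct and matches the argument the paper relies on; the paper gives no proof beyond citing Achille--Soatto, Prop.~3.1 and \S5. In (a) you run the same data-processing-plus-chain-rule computation, only expanding $I(Z;Y,N)$ in the opposite order, which is equivalent since with $I(Y;N)=0$ both $I(Z;Y\mid N)-I(Z;Y)$ and $I(Z;N\mid Y)-I(Z;N)$ equal $I(Y;N\mid Z)\ge 0$. In (b) you invoke the same \S5 weight-information bound, and your caveat that $\TC(Z)$ is controlled only through weight information, not through $I(Z;X)$ itself, is apt and consistent with the separate controllability noted in Lemma~\ref{lem:decomp}.

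To justify ``maximally invariant'' rather than ``minimizes a uniform upper bound,'' you could add the tightness half behind Remark~\ref{rem:residual}, which drops out of your own decomposition. Take a nuisance $N\perp Y$ with $X$ a function of $(Y,N)$; such an $N$ exists by the functional representation lemma. Your first inequality is then an equality, giving $I(Z;N)=I(Z;X)-I(X;Y)-I(Y;N\mid Z)$ with $0\le I(Y;N\mid Z)\le H(Y\mid X)$.
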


We next characterize the single-latent LWM under these criteria.

\begin{proposition}
\label{prop:LeWorldModel}
\textbf{Sufficiency.} Under Assumptions~\ref{ass:surrogate}, \ref{ass:1gauss}, \ref{ass:sigreg}, the population SIGReg objective vanishes if and only if $q(\zvec_t)=\Ncal(0,I_d)$, in which case $\TC(Z_t)=0$ and the coordinates are mutually independent and disentangled. Moreover, the latent representation learned by LWM is sufficient for predicting the underlying state. \textbf{No guaranteed minimality.} For every $\lambda\ge0$, $\SIGReg$ is unchanged by encoder modifications that
preserve $q(\zvec_t)$, including those changing the mutual information $I(Z_t;X_t)$. In fact, $I(Z_t;X_t)$ is
pushed up when $I(Z_t;N)$ maximizes; and
$\Lcal_{\mathrm{pred}}(Z_t)$ only rewards information about the transition but not
penalizes $I(Z;N)$. Hence, during training no term of $\Lcal_{\mathrm{LWM}}$ decreases $I(Z_t;X_t)$,
and the minimizers of $\Lcal_{\mathrm{LWM}}$ does not generally guarantee minimal prediction-sufficient
statistics.
\end{proposition}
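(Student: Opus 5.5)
The proof splits into the two halves of the statement: a sufficiency/disentanglement half and a no-minimality half. The assumptions are referenced but not shown, so I take Assumption~\ref{ass:surrogate} to say the prediction loss is a faithful surrogate for conditional predictability of $\zvec_{t+1}$ from $(\zvec_t,\avec_t)$. I take Assumption~\ref{ass:1gauss} to say the latent law is absolutely continuous with finite second moments. I take Assumption~\ref{ass:sigreg} to say SIGReg is the population average, over a direction distribution with full support on the sphere, of a univariate goodness-of-fit statistic (e.g.\ Epps--Pulley) that vanishes iff the projected law is $\Ncal(0,1)$.

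\textbf{Sufficiency and disentanglement.} For the forward direction, if $\SIGReg=0$ then, by continuity in the direction, every one-dimensional projection $\langle w,\zvec_t\rangle$ is $\Ncal(0,1)$ for all unit $w$. The Cramér--Wold theorem then gives $q(\zvec_t)=\Ncal(0,I_d)$. The converse is immediate, since projections of an isotropic Gaussian are standard normal. Because the joint law factorizes as $\prod_i\Ncal(0,1)$, we get $\TC(Z_t)=\sum_i H(Z_{t,i})-H(Z_t)=0$, so the coordinates are mutually independent.

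For sufficiency of the state, I would use Assumption~\ref{ass:surrogate} to argue as follows. At a minimizer, the predictor attains the Bayes-optimal prediction of the next latent, and the SIGReg constraint excludes the collapsed solution, since a constant $\zvec$ cannot be $\Ncal(0,I_d)$. I would then argue that the latent preserves the action-conditioned transition, so that $I(Z_{t+1};Z_t,A_t)$ equals the information the state carries about its successor, and conclude $I(S;Z_t)=I(S;X_t)$ for the relevant task $Y=S$. This step leans most heavily on what Assumption~\ref{ass:surrogate} actually grants, and I expect it to be the main obstacle. If the assumption is weaker than I hope, I would restate sufficiency relative to the transition-prediction task rather than the full state.

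\textbf{No guaranteed minimality.} This half proceeds by an invariance construction. $\SIGReg$ depends on the encoder only through the marginal $q(\zvec_t)$. Take any measure-preserving bijection $T$ of $\Ncal(0,I_d)$, for example a rotation or a coordinate-wise measure-preserving rearrangement, chosen to depend on a nuisance $N$ that is independent of $(Y,S)$. The modified encoder $T_N\circ g_z$ leaves $q(\zvec_t)$ and hence $\SIGReg$ unchanged for every $\lambda\ge0$. However, it encodes $N$, so $I(Z_t;X_t)$ increases by up to $I(Z_t;N\mid\cdot)>0$.

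The prediction term should also be unaffected when $N$ is trajectory-persistent, because the predictor can apply the matching $T_N$ to its output. It then follows that $\Lcal_{\mathrm{LWM}}$ has minimizers with strictly larger $I(Z_t;X_t)$ than others. Equivalently, no term carries a negative gradient in $I(Z_t;X_t)$: the prediction loss only rewards transition information, and SIGReg sees only marginals. Invoking Theorem~\ref{thm:criterion}(a) then shows that invariance to $N$ is not enforced. The statement that $I(Z_t;X_t)$ is pushed up when $I(Z_t;N)$ is large is just the chain-rule inequality $I(Z;X)\ge I(Z;N)+I(Z;Y)$ for independent $N,Y$ with $Z$ a function of $X$.
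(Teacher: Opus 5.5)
Your SIGReg step (the statistic vanishes, continuity extends normality to every direction, Cram\'er--Wold gives $q(\zvec_t)=\Ncal(0,I_d)$, hence $\TC(Z_t)=0$) is the paper's argument. The gap is state sufficiency, and it stems from misreading the assumptions. Assumption~\ref{ass:1gauss} is not a regularity condition. It is the single-factor linear-Gaussian controlled world: $X=g(s)$ with $g$ injective, $s_{t+1}=As_t+B\avec_t+\xi_t$, spectral separation, and action excitation. Assumption~\ref{ass:sigreg} is population optimality plus realizability: $d_z=d_s$, an encoder with $f(g(s))=Qs$ in the class, and a predictor that realizes the conditional mean. The paper combines these with the identifiability results of Klindt et al.\ and Zhang et al.\ to get $Z_t=Qs_t$ a.s.\ for some $Q\in O(d_s)$, whence $s_t=Q^\top Z_t$. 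Your chain (Bayes-optimal predictor, no collapse, hence $I(S;Z_t)=I(S;X_t)$) uses none of this structure. It would equally ``prove'' sufficiency in a world with a trajectory-persistent factor, where sufficiency is false. For example, take $v\sim\Ncal(0,I_{d_v})$, constant along each trajectory, independent of the state, with $d_v\ge d_z$, and let $\zvec_t$ be the first $d_z$ coordinates of $v$. Then $\SIGReg=0$ and the identity predictor gives $\Lcal_{\mathrm{pred}}=0$. This is a global minimizer of $\Lcal_{\mathrm{LWM}}$ with a Bayes-optimal predictor and no collapse, yet $I(Z_t;S_t)=0$. Excluding such slow-feature solutions is exactly the job of the single-factor assumption and the identifiability theorem. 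Your fallback, restating sufficiency for the transition task only, proves a weaker claim than the one stated.

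For the minimality half, your closing remark is the paper's whole proof, and there you agree with it: SIGReg sees only the aggregate $q(\zvec_t)$, $\Lcal_{\mathrm{pred}}$ only rewards transition information, so nothing penalizes $I(Z_t;X_t)$. The invariance construction you lead with, however, fails on three counts.

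First, suppose $W=g_z(X_t)\sim\Ncal(0,I_d)$ is independent of $N$, as it is when it encodes only state, and each $T_n$ preserves $\Ncal(0,I_d)$. Then $T_N(W)$ is $\Ncal(0,I_d)$ conditionally on every value of $N$, so $I(Z_t;N)=0$. The per-frame code does not encode $N$; it only scrambles the state information.

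Second, Assumption~\ref{ass:surrogate} is the additive-noise encoder $\tilde{\zvec}=\enc_\theta(\ovec)+\bm{\varepsilon}$. Under it, $I(Z_t;X_t)=H(\tilde{Z}_t)-H(\bm{\varepsilon})$ depends on $q(\zvec_t)$ alone, so a deterministic $q$-preserving modification cannot change it; without noise it is $+\infty$ before and after. The freedom the paper appeals to lies in the conditional law $q(\zvec\mid k)$. By Lemma~\ref{lem:decomp} and Remark~\ref{rem:sigreg-vs-kl}, the SIGReg minimizer kills terms (ii) and (iii) but leaves the rate term (i) unconstrained.

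Third, the predictor never sees $N$, and by the first point a single $\zvec_t$ carries no information about it, so the predictor cannot apply the matching $T_N$. For a rotation the transition becomes $T_NQAQ^\top T_N^\top\zvec_t+T_NQB\avec_t$, which depends on $N$. The prediction loss therefore generally rises, and the modified encoder is not a minimizer.

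Note also that under Assumption~\ref{ass:1gauss} there is no independent nuisance in $X=g(s)$ for an encoder to read at all. A correct witness needs a world that contains one. The persistent-factor code from the previous paragraph is such a witness: a global minimizer of $\Lcal_{\mathrm{LWM}}$ whose latent is entirely nuisance.
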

The proof is in Appendix~\ref{sec:anal_lwm}. SIGReg's unique population minimizer $\Ncal(0,I_d)$ prevents collapse, fixes scale, and decorrelates features, but does not prevent control-irrelevant nuisance information from occupying the latent. This limitation is consistent with Figure~\ref{fig:capacity_complexity_tradeoff}.

\subsection{Model Architecture}
\label{sec:lrc_architecture}

\begin{figure*}[t]
    \centering
    \includegraphics[width=\textwidth]
        {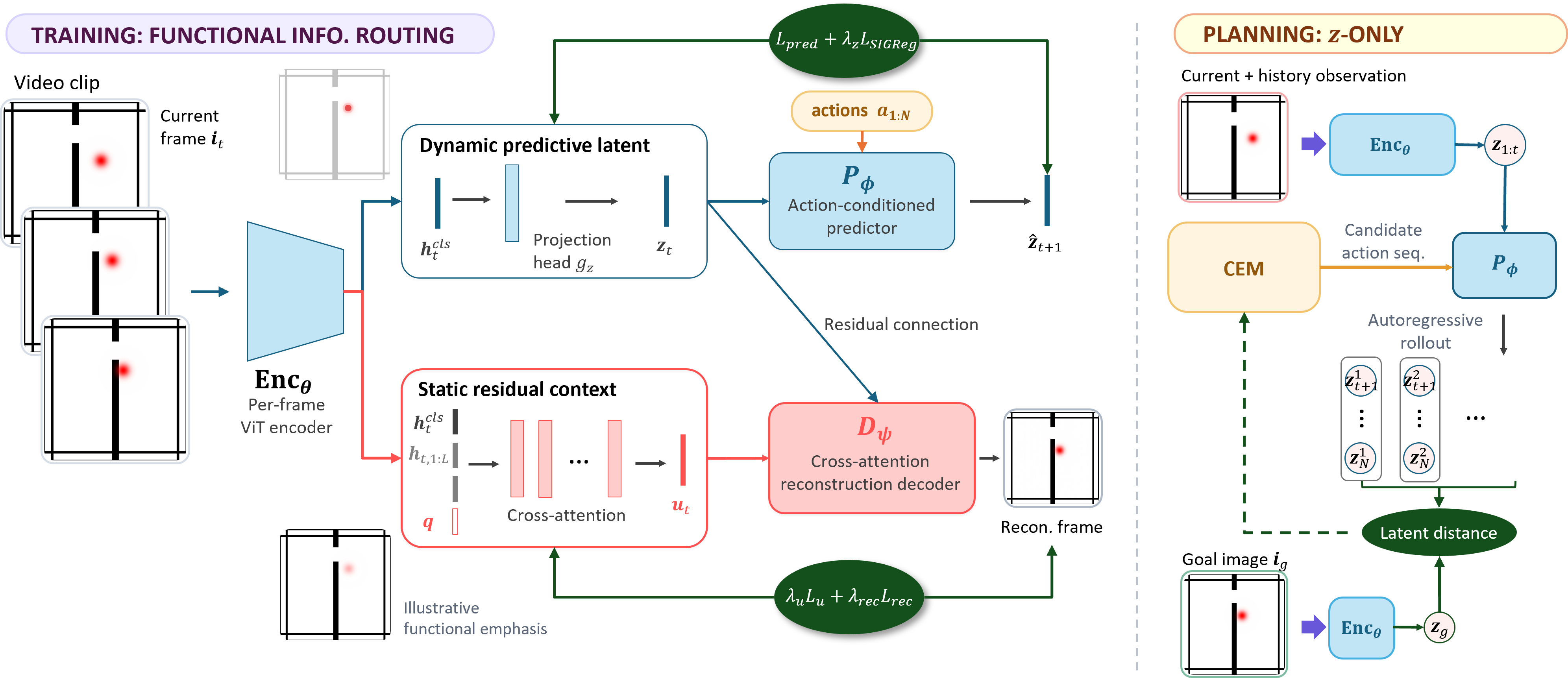}
    \caption{\textbf{LRC-JEPA architecture.} During training, functional information routing separates a compact, dynamics-predictive latent $z_t$ from an auxiliary representation $u_t$ that supports reconstruction through cross-attention. This separation is the key novelty of our method: it concentrates control-relevant dynamics in $z$, enabling efficient $z$-only planning with CEM and latent-distance objectives at test time.}
    \label{fig:LRC-JEPA}
\end{figure*}

An observation $\mathbf{i}_t\in\mathbb{R}^{3\times H\times W}$ contains action-dependent state and slowly varying context such as background, lighting, camera configuration, and appearance. LRC-JEPA assigns them complementary roles: $\mathbf{z}_t$ is a compact \emph{predictive latent state} used by the prediction model and planning objective, whereas $\mathbf{u}_t$ is a bottlenecked \emph{residual-context embedding} used only for reconstruction.

\paragraph{Shared visual encoder.}

Each frame ($\mathbf{i}_t$) is processed independently by a standard Vision Transformer \cite{dosovitskiy2020image}, denoted by 
$\operatorname{Enc}_{\theta}$. We represent its final-layer class and patch tokens as
\begin{equation}
    \left[
        \mathbf{h}^{\mathrm{cls}}_t,
        \mathbf{h}_{t,1},\ldots,\mathbf{h}_{t,L}
    \right]
    =
    \operatorname{Enc}_{\theta}(\mathbf{i}_t)
    \in \mathbb{R}^{(L+1)\times d_h},
    \label{visual_encoder}
\end{equation}
where $\mathbf{h}^{\mathrm{cls}}_t,\mathbf{h}_{t,\ell}$ denote the class token and the $\ell$-th patch token, respectively. 
The predictive latent is obtained from the class token using a
two-layer projection head,
\begin{equation}
    \mathbf{z}_t
    =
    g_z\!\left(\mathbf{h}^{\mathrm{cls}}_t\right)
    \in \mathbb{R}^{d_z},
    \qquad
    g_z
    =
    \operatorname{Linear}
    \circ \operatorname{GELU}
    \circ \operatorname{BN}
    \circ \operatorname{Linear},
    \label{eq:z_encoder}
\end{equation}

\paragraph{Learned-query residual context.}

To retain spatial and appearance information without passing all patch tokens to the dynamics model, we attentively pool the complete token set using $Q$ learnable queries
$\mathbf{q}\in\mathbb{R}^{Q\times d_u}$. With
$\mathbf{H}_t=[\mathbf{h}^{\mathrm{cls}}_t;
\mathbf{h}_{t,1:L}]$, $W_u$ as a linear projection from ViT feature to context token domain, the context encoder is
\begin{align}
    \widetilde{\mathbf{u}}_t
    &=
    \operatorname{MHA}
    \left(
        \operatorname{LN}(\mathbf{q}),
        \operatorname{LN}(W_u\mathbf{H}_t),
        \operatorname{LN}(W_u\mathbf{H}_t)
    \right), \\
    \mathbf{u}_t
    &=
    \widetilde{\mathbf{u}}_t
    +
    \operatorname{MLP}(\widetilde{\mathbf{u}}_t)
    \in \mathbb{R}^{Q\times d_u}.
    \label{eq:u_encoder}
\end{align}
Thus, $\mathbf{u}_t$ summarizes the spatial feature map through a constrained reconstruction-only bottleneck, discouraging frame-wise shortcuts.

\paragraph{Action-conditioned latent predictor.}

An action encoder $g_a$ maps the normalized action associated with each
visual transition to an embedding $\mathbf{e}_t$,
\begin{equation}
    \mathbf{e}_t = g_a(\mathbf{a}_t)\in\mathbb{R}^{d_z}.
\end{equation}
When one visual transition spans $F$ low-level control steps, the $F$
actions are concatenated before being passed to $g_a$. For
multi-dataset training, dataset-specific action adapters map
heterogeneous action spaces into the common embedding space \cite{maes2026leworldmodel}.

Given a history of $T$ predictive states, a causal transformer $P_{\phi}$ predicts the subsequent states by teacher forcing:
\begin{align}
    \widetilde{\mathbf{z}}_{1:T}
    &=
    P_{\phi}
    \left(
        \mathbf{z}_{1:T}
        +
        \mathbf{r}_{1:T};
        \mathbf{e}_{1:T}
    \right),\\
    \widehat{\mathbf{z}}_{t+1}
    &=
    g_p(\widetilde{\mathbf{z}}_t),
    \qquad t=1,\ldots,T.
    \label{eq:latent_predictor}
\end{align}
Here $\mathbf{r}_{1:T}$ are learned temporal embeddings and $g_p$ has the same structure as $g_z$. Causal self-attention restricts position $t$ to $\mathbf{z}_{1:t}$, while AdaLN-Zero is adapted to every transformer block to condition on actions \cite{peebles2023scalable}. Crucially, $\mathbf{u}_t$ never enters $P_{\phi}$: only $\mathbf{z}_t$ is rolled forward and used for planning.

\paragraph{Cross-attention reconstruction decoder.}

Reconstruction is used to shape the allocation of information between
the two latent streams. For each frame, the decoder context is
\begin{equation}
    \mathbf{c}_t
    =
    \left[
        \mathbf{z}_t;
        \mathbf{u}_{t,1};
        \ldots;
        \mathbf{u}_{t,Q}
    \right].
    \label{eq:decoder_context}
\end{equation}
We use one learned query $\mathbf{s}^{(0)}_n$ for each output image patch. The $l$-th decoder layer applies
\begin{align}
    \mathbf{s}^{(\ell)}_t
    &\leftarrow
    \mathbf{s}^{(\ell-1)}_t
    +
    \operatorname{MHA}
    \left(
        \operatorname{LN}(\mathbf{s}^{(\ell-1)}_t),
        \operatorname{LN}(W_c\mathbf{c}_t),
        \operatorname{LN}(W_c\mathbf{c}_t)
    \right),\\
    \mathbf{s}^{(\ell)}_t
    &\leftarrow
    \mathbf{s}^{(\ell)}_t
    +
    \operatorname{MLP}
    \left(\operatorname{LN}(\mathbf{s}^{(\ell)}_t)\right).
\end{align}
A linear output head maps each final query to a flattened RGB patch,
which is unpatchified to $\widehat{\mathbf{i}}_t$.

\subsection{Training Objective}
\label{sec:lrc_objective}

LRC-JEPA is trained with four complementary objectives: one-step latent prediction, anti-collapse regularization, temporal context regularization, and patch reconstruction. The action-conditioned predictor maps $\mathbf{z}_{1:T}$ to targets $\mathbf{z}_{2:T+1}$ using
\begin{equation}
    \mathcal{L}_{\mathrm{pred}}
    =
    \frac{1}{BTd_z}
    \sum_{b=1}^{B}
    \sum_{t=1}^{T}
    \left\|
        \widehat{\mathbf{z}}^{(b)}_{t+1}
        -
        \mathbf{z}^{(b)}_{t+1}
    \right\|_2^2.
    \label{eq:pred_loss}
\end{equation}
Following LeJEPA and LWM \cite{balestriero2025lejepa, maes2026leworldmodel}, we regularize the predictive latent with SIGReg loss $\mathcal{L}_{\mathrm{SIG}}$, without a stop-gradient target or momentum encoder. We further apply a temporal VICReg-style objective \cite{bardes2022vicreg} $\mathcal{L}_u$ to encourage the residual context $\mathbf{u}_t$ to remain stable within a trajectory while retaining variation across trajectories. Finally, a patch reconstruction loss
\begin{equation}
\mathcal{L}_{\mathrm{rec}}
=
\operatorname{MSE}
\left(
D_{\psi}(\mathbf{z}_t,\mathbf{u}_t),
\mathbf{i}_t
\right)
\label{eq:rec_loss}
\end{equation}
encourages both latent streams to preserve scene information. The complete objective is
\begin{equation}
\mathcal{L}_{\mathrm{LRC\text{-}JEPA}}
=
\mathcal{L}_{\mathrm{pred}}
+
\lambda_z\mathcal{L}_{\mathrm{SIG}}
+
\lambda_u\mathcal{L}_{u}
+
\lambda_{\mathrm{rec}}\mathcal{L}_{\mathrm{rec}}.
\label{eq:lrc_total}
\end{equation}
We use $(\lambda_z,\lambda_u,\lambda_{\mathrm{rec}})=(0.05,0.05,0.2)$. Full definitions and implementation details of each regularizer are provided in Appendix~\ref{sec:imp_details}.

\subsection{Theoretical Analysis for The Proposed Method}

LRC-JEPA splits the representation into a predictive latent
$\zvec_t\in\Rb^{d_z}$ (SIGReg-regularized, rolled forward by the
action-conditioned predictor, alone used for planning) and a
query-bottlenecked \emph{residual context} $\uvec_t$ (regularized to be
stable within a trajectory), joined only in the reconstruction decoder
$D_\psi(\zvec_t,\uvec_t)$. Theorem~\ref{thm:lrc} shows that this split
structurally supplies the ingredient of optimal representation theory in~\ref{thm:criterion}.
 
\begin{theorem}[Optimality of LRC-JEPA]
\label{thm:lrc}
Under Assumptions~\ref{ass:surrogate}, \ref{ass:2gauss},
and~\ref{ass:lrc} (two-factor Gaussian world $X_t=g(s_t,v)$: hidden state
$s_t\in\Rb^d_s$, trajectory-constant appearance
nuisance $v\perp s$), every global minimizer of $\Lcal_{\mathrm{LRC}}$ satisfies:
(i)~\emph{sufficiency:} $\uvec_t$ is sufficient for $v$, and $\zvec_t$ is a sufficient statistic of $s_t$ conditioned on $u_t$;
(ii)~\emph{minimality:} $I(Z_t;X_t)$ is minimal among prediction-sufficient representations;
(iii)~\emph{invariance:} $I(Z_t;V)=0$; and
(iv)~\emph{disentanglement:} $q(\zvec_t)=\Ncal(0,I_n)$ and $\TC(Z_t)=0$. 
\end{theorem}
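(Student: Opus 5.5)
The plan is to work entirely at the population level of the two-factor Gaussian world of Assumption~\ref{ass:2gauss}, where $X_t=g(s_t,v)$ with $g$ injective, $s_t$ following action-conditioned linear-Gaussian dynamics and $v\perp s_{1:T}$ constant along a trajectory. Under Assumption~\ref{ass:surrogate} I take the population loss to have the separable form $\Lcal_{\mathrm{pred}}+\lambda_z\Lcal_{\mathrm{SIG}}+\lambda_u\Lcal_u+\lambda_{\mathrm{rec}}\Lcal_{\mathrm{rec}}$. I will first bound each term below by a quantity that does not depend on the parameters. Then I will show that all these lower bounds can be attained at once by an explicit ``oracle'' encoder. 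This encoder sets $\zvec_t=A s_t$, with $A$ a whitening map so that $q(\zvec_t)=\Ncal(0,I)$, and sets $\uvec_t$ to an injective function of $v$. The predictor applies the true linear dynamics, and the decoder inverts $g$. So the global minimizers are exactly the parameter settings that attain every bound simultaneously, and properties (i)--(iv) will be read off from the equality conditions.

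The order of steps is as follows. \textbf{Step 1 (SIGReg, giving (iv)).} I invoke the first part of Proposition~\ref{prop:LeWorldModel}: $\Lcal_{\mathrm{SIG}}=0$ if and only if $q(\zvec_t)=\Ncal(0,I)$, and then $\TC(Z_t)=0$. The oracle attains this, so every minimizer does as well. \textbf{Step 2 (temporal context loss).} The invariance term in $\Lcal_u$ vanishes only if $\uvec_t$ is constant within a trajectory, so $\uvec_t$ is a function of $v$ alone, since $v$ is the only trajectory-constant variable and $s_t$ is nondegenerate in time. The variance term additionally prevents $\uvec$ from collapsing to a constant.

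\textbf{Step 3 (reconstruction, giving (i)).} $\Lcal_{\mathrm{rec}}=0$ requires $X_t$ to be a function of $(\zvec_t,\uvec_t)$. Because $\uvec_t$ depends only on $v$ and $g$ is injective, the trajectory-varying part $s_t$ must be recoverable from $\zvec_t$ given $\uvec_t$. Moreover $v$ must be recoverable from $\uvec_t$: $\zvec_t$ is Gaussian with fixed unit covariance, and the prediction loss forbids it from carrying independent $v$-noise. This gives sufficiency of $\uvec_t$ for $v$ and of $\zvec_t$ for $s_t$ given $\uvec_t$.

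\textbf{Step 4 (prediction, giving (ii) and (iii)).} This is where the main obstacle lies, namely ruling out that $\zvec_t$ also encodes $v$. My first attempt is a Gaussian argument. Write $\zvec_t=As_t+Bv$ (linearity is enforced by the Gaussian marginal together with Assumption~\ref{ass:lrc}). The predictor sees only $\zvec_{1:t}$ and actions, and the condition $\Lcal_{\mathrm{pred}}=0$ makes $\zvec_{t+1}$ a deterministic function of $(\zvec_{1:t},\avec_t)$. I then combine this with the fixed budget $\mathrm{Cov}(\zvec_t)=I_{d_z}$ and with $d_z=d_s$ as assumed in Assumption~\ref{ass:lrc}. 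If $B\neq0$, the $v$-component uses up dimensions needed for $s_t$, so $s_t$ would no longer be recoverable from $(\zvec_t,\uvec_t)$, contradicting Step~3. Hence $B=0$, which gives $I(Z_t;V)=0$. The map $s_t\mapsto\zvec_t$ is then a bijection, so $\zvec_t$ is a minimal sufficient statistic for the transition, and $I(Z_t;X_t)=I(S_t;X_t)$ attains the minimum among prediction-sufficient representations. Theorem~\ref{thm:criterion}(a) then confirms the invariance statement as a consequence of (ii).

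If this dimension-counting argument fails because Assumption~\ref{ass:lrc} permits $d_z>d_s$, I would instead require, as part of the assumption, that the surrogate $\Lcal_{\mathrm{SIG}}$ penalizes latent directions that are unpredictable from the dynamics. I would then argue via the Gaussian conditional-variance identity that any $v$-component strictly increases the objective.
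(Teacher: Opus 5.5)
Your realizability argument and Steps 1--2 follow the paper's proof. There is, however, a genuine gap at both places where $v$ has to be kept out of $\zvec_t$.

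In Step 3 you obtain $H(V\mid U)=0$ by claiming that $\Lcal_{\mathrm{pred}}$ forbids $\zvec_t$ from carrying $v$-content. It does not. Because $v$ is constant along a trajectory, any $v$-component of $\zvec_t$ reappears unchanged in $\zvec_{t+1}$ and is trivially predictable from $\zvec_{1:t}$. This is exactly the slow-feature shortcut the paper is designed against, and the prediction loss exerts no pressure on it.

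The ingredient you never use is the minimum-rate clause of Assumption~\ref{ass:lrc}. The paper turns reconstruction into an amortization bound. From $H(V,S_t\mid U,Z_t)=0$ it gets $H(Z_t)\ge H(Z_t\mid U)\ge H(V\mid U)+H(S_t)$. Summing gives $H(U)+\sum_{t=1}^T H(Z_t)\ge H(V)+\sum_{t=1}^T H(S_t)+(T-1)H(V\mid U)$. Since $U$ is paid for once per trajectory while each $Z_t$ is paid for at every step, $T>1$ together with minimum rate forces $H(V\mid U)=0$.

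Step 4 fails for a related reason. First, linearity of $\zvec_t$ in $(s_t,v)$ does not follow from a Gaussian marginal, since nonlinear measure-preserving maps also keep $q(\zvec)=\Ncal(0,I)$. Second, even granting $\zvec_t=As_t+Bv$, a nonzero $B$ contradicts nothing you have established:
\begin{itemize}
\item since $\uvec_t$ determines $v$, $s_t=A^{-1}(\zvec_t-Bv)$ is recoverable from $(\zvec_t,\uvec_t)$;
\item the joint map $(s_t,v)\mapsto(\zvec_t,\uvec_t)$ is block-triangular, hence invertible for every $B$;
\item $AA^\top+B\Sigma_vB^\top=I$, with $\Sigma_v=\mathrm{Cov}(v)$, is satisfiable with $A$ invertible when $B$ is small.
\end{itemize}
A superimposed $v$-term does not ``use up dimensions'' once the other stream knows $v$, so dimension counting cannot yield invariance.

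What is needed is an information budget, not a dimension count. The paper combines $q(\zvec_t)=\Ncal(0,I_{d_s})$, $d_z=d_s$, $s_t\sim\Ncal(0,I_{d_s})$, $s\perp v$ and the noisy surrogate, which together pin $H(Z_t)$ to $H(S_t)$. Once $H(V\mid U)=0$, the chain above is then tight, and equality in $H(Z_t\mid U)\le H(Z_t)$ gives $I(Z_t;U)=0$, hence $I(Z_t;V)=0$. Under the surrogate the same effect appears concretely: $B\neq0$ shrinks $AA^\top=I-B\Sigma_vB^\top$ and strictly lowers $I(\tilde{Z}_t;S_t\mid V)$.

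Your fallback does not repair this. The $v$-directions are perfectly predictable, so a penalty on unpredictable latent directions would leave them untouched.
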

Theorem~\ref{thm:lrc} formalizes how two components induce this separation. First, under squared reconstruction loss, information already represented by $U_t$ cannot further reduce the reconstruction residual, so residual-connected reconstruction rewards complementary dynamic information in $Z_t$. Second, detaching $Z_t$ removes this reconstruction constraint and can yield a predictable yet insufficient state. Appendix~\ref{sec:anal_lrc} gives the full analysis and proof.

\section{Results}
\label{sec:results}

We evaluate latent planning, physical-state accessibility, reconstruction interventions, and architectural ablations in simulation and on Bridge-v2 real-world dataset \cite{walke2023bridgedatav2}.

\subsection{Latent Planning Performance}

\paragraph{Evaluation Protocol.}
We evaluate four simulated environments and Bridge-v2, spanning 2D and 3D manipulation and motion planning. Simulation uses image-goal model predictive control (MPC): the cross-entropy method (CEM) optimizes candidate action sequences by terminal latent distance, executes the first segment, and replans. Bridge-v2 uses offline expert-action recovery measured by recall and expert-action rank percentile.

\paragraph{Baseline.}

We compare with LWM~\cite{maes2026leworldmodel}, V-JEPA2~\cite{assran2025vjepa2}, DINO-WM~\cite{zhou2024dinowm}, PLDM~\cite{sobal2025pldm}, C-JEPA~\cite{nam2026causaljepa}, and EB-JEPA~\cite{terver2026ebjepa} using common trajectory splits, observations, actions, and goals (Appendix~\ref{sec:imp_details}). LWM \cite{maes2026leworldmodel} shares LRC-JEPA's $5.5$M-parameter encoder, predictive-state interface, inference architecture, and active parameter count, isolating the effect of representation disentanglement.

\begin{figure}[ht]
\centering
\includegraphics[width=1.0\textwidth]{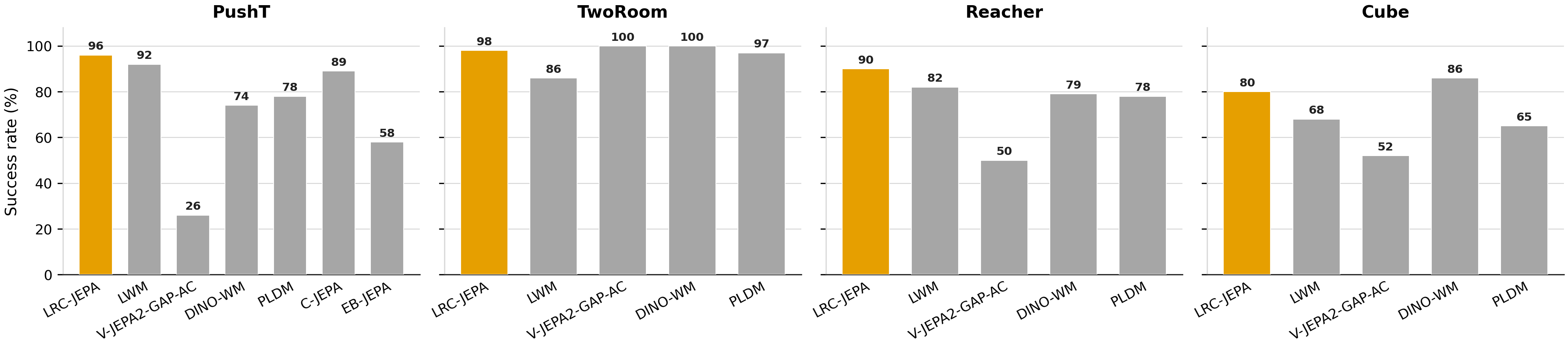}
\caption{CEM planning success rates on four simulated environments. Values are averaged over three repetitions of 50 evaluation scenes.}
\label{fig:latent_planning}
\end{figure}

\begin{figure}[ht]
\centering
\includegraphics[width=1.0\textwidth]{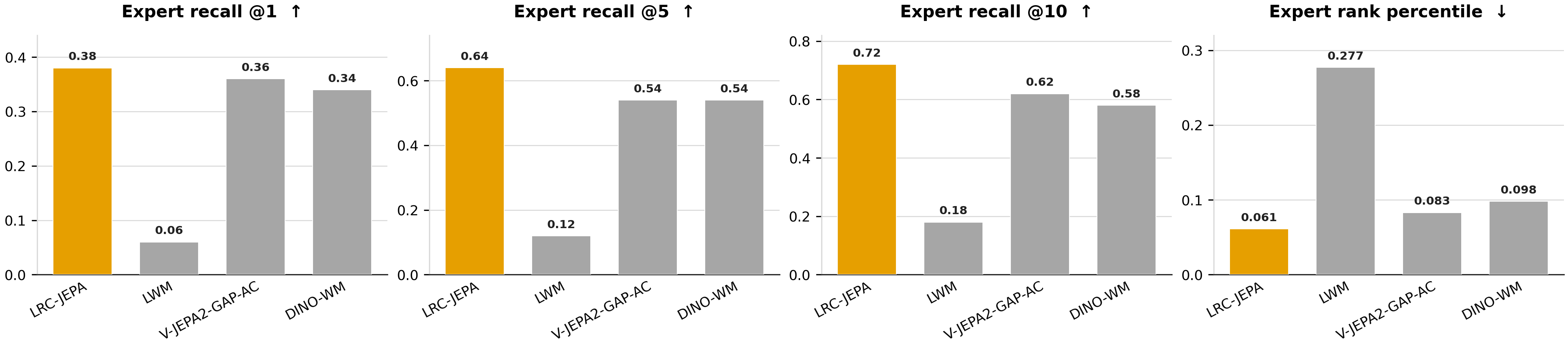}
\caption{Expert action recall and ranking percentile of world models by CEM planning on Bridge-v2 offline dataset. Results averaged on 3 repetitions of 50 distinct scenes.}
\label{fig:offline_latent_planning}
\end{figure}

LRC-JEPA improves average success over parameter-matched LWM by 9pp across the four simulated environments (Fig.~\ref{fig:latent_planning}). It achieves the highest success on kinematics-intensive Push-T and Reacher and remains competitive on TwoRoom and Cube with pretrained DINO-WM, V-JEPA2-GAP-AC, and C-JEPA, despite training its encoder from scratch and planning by one compact state.

On Bridge-v2, LRC-JEPA leads all four offline metrics (Fig.~\ref{fig:offline_latent_planning}), with recall@1/5/10 of $0.38/0.64/0.72$, versus $0.36/0.54/0.62$ for V-JEPA2-GAP-AC and $0.34/0.54/0.58$ for DINO-WM. Its active visual encoder has only $5.5$M parameters, compared with $303.9$M and $22.1$M, respectively (Table~\ref{tab:num_params}, with full planning time reported). Parameter-matched LWM reaches only $0.06/0.12/0.18$, supporting functional routing as the source of the gain rather than model scale.

\begin{table}[H]
\caption{The number of active parameters of each world model encoder and the average full planning time over 50 different trajectories.}
\label{tab:num_params}
\small
\centering
\setlength{\tabcolsep}{4pt}
\begin{tabular}{lccccc}
\toprule
\textbf{Model} & LRC-JEPA & LWM & V-JEPA2-GAP-AC & DINO-WM & EB-JEPA \\
\midrule
\textbf{Num. of params.} & 5.5M & 5.5M & 303.9M & 22.1M & 1.1M \\
\textbf{Full planning time (s)} & 0.45 & 0.48 & 0.83 & 57.56 & 0.20 \\
\bottomrule
\end{tabular}
\end{table}

\subsection{Physical Quantity Probing}
\label{sec:probe_results}

\paragraph{Evaluation Protocol.}
We assess physical information accessible from LRC-JEPA's predictive latent $\mathbf{z}$ and context embedding $\mathbf{u}$ using ridge regression and MLP probes. Probes are trained on frozen, converged encoder outputs and evaluated on held-out data. Targets are simulator-provided physical quantities for simulated environments and pseudo-labels derived from SAM3 \cite{carion2025sam3} object masks and attributes for Bridge-v2. We evaluate LWM's single per-frame latent using the same protocol; fitting details appear in Appendix~\ref{sec:imp_details}.

\paragraph{Comparison with baseline.}
LRC-JEPA outperforms LWM in 28 of 32 Cube comparisons (Table~\ref{tab:cube_probe_comparison}), with the largest gains on kinematic and control-relevant quantities. For joint velocity, linear and MLP correlations rise from $0.111$ and $0.068$ to $0.392$ and $0.312$, respectively. Push-T and TwoRoom results appear in Appendix Tables~\ref{tab:pusht_probe_comparison} and~\ref{tab:tworoom_probe_comparison}.

\begin{table}[ht]
\centering
\caption{Physical quantity probes on Cube.
Each entry reports MSE ($\downarrow$) / Pearson correlation
$r$ ($\uparrow$).
\textbf{Bold} marks the better predictive latent between LWM and
LRC-JEPA; \underline{underlining} marks the better representation between
LRC-JEPA's $\mathbf{z}$ and $\mathbf{u}$, separately for each
metric and probe type.
EE: end-effector; quat.: quaternion.}
\label{tab:cube_probe_comparison}
\label{tab:cube_probe}
\footnotesize
\setlength{\tabcolsep}{2pt}
\renewcommand{\arraystretch}{1.12}
\begin{tabular*}{\linewidth}{
  @{\extracolsep{\fill}}lcccccc@{}
}
\toprule
\multirow{3}{*}{\textbf{Quantity}}
& \multicolumn{3}{c}{\textbf{Linear}}
& \multicolumn{3}{c}{\textbf{MLP}} \\
\cmidrule(lr){2-4}\cmidrule(lr){5-7}
& LWM
& \multicolumn{2}{c}{LRC-JEPA}
& LWM
& \multicolumn{2}{c}{LRC-JEPA} \\
\cmidrule(lr){3-4}\cmidrule(lr){6-7}
& $\mathbf{z}$ & $\mathbf{z}$ & $\mathbf{u}$
& $\mathbf{z}$ & $\mathbf{z}$ & $\mathbf{u}$ \\
\midrule
Joint pos.
& 0.293 / 0.698
& \textbf{\underline{0.274}} / \textbf{\underline{0.723}}
& 0.572 / 0.500
& 0.311 / 0.692
& \textbf{\underline{0.305}} / \textbf{\underline{0.696}}
& 0.444 / 0.653 \\

Joint vel.
& 0.969 / 0.111
& \textbf{\underline{0.734}} / \textbf{\underline{0.392}}
& 0.990 / 0.039
& 0.999 / 0.068
& \textbf{\underline{0.893}} / \textbf{\underline{0.312}}
& 1.013 / 0.026 \\

EE pos.
& 0.019 / 0.990
& \textbf{\underline{0.011}} / \textbf{\underline{0.995}}
& 0.380 / 0.685
& 0.009 / 0.995
& \textbf{\underline{0.007}} / \textbf{\underline{0.997}}
& 0.144 / 0.925 \\

EE yaw
& 0.989 / 0.073
& \textbf{0.930} / \textbf{0.250}
& \underline{0.876} / \underline{0.336}
& \textbf{1.009} / 0.100
& 1.027 / \textbf{0.128}
& \underline{0.897} / \underline{0.321} \\

Gripper
& 0.077 / 0.960
& \textbf{\underline{0.061}} / \textbf{\underline{0.969}}
& 0.193 / 0.897
& 0.038 / 0.981
& \textbf{\underline{0.035}} / \textbf{\underline{0.982}}
& 0.084 / 0.957 \\

Block pos.
& 0.007 / 0.996
& \textbf{\underline{0.006}} / \textbf{\underline{0.997}}
& 0.120 / 0.933
& 0.006 / 0.997
& \textbf{\underline{0.005}} / \textbf{\underline{0.998}}
& 0.017 / 0.991 \\

Block quat.
& 0.678 / 0.053
& \textbf{\underline{0.653}} / \textbf{\underline{0.164}}
& 0.672 / 0.073
& 0.694 / 0.048
& \textbf{\underline{0.692}} / \textbf{0.063}
& 0.694 / \underline{0.094} \\

Block yaw
& \textbf{1.010} / 0.056
& 1.012 / \textbf{0.060}
& \underline{0.971} / \underline{0.189}
& \textbf{1.046} / \textbf{0.066}
& 1.052 / 0.065
& \underline{0.968} / \underline{0.230} \\
\bottomrule
\end{tabular*}
\end{table}

\paragraph{Information Allocation Across Streams.}
Table~\ref{tab:cube_probe} reveals the intended allocation: $\mathbf{z}$ better predicts joint position and velocity, end-effector and block positions, and gripper state, whereas $\mathbf{u}$ better predicts the more slowly varying end-effector and block yaw. Concatenation balances the two on most quantity probes (Appendix Table~\ref{tab:cube_probe_z_plus_u}). These results support a bias toward control-relevant kinematics in $\mathbf{z}$ and persistent context in $\mathbf{u}$.

On Bridge-v2, SAM3-derived masks identify gripper, moving, static, and background objects for centroid and RGB probes. Predictive $\mathbf{z}$ better captures gripper and moving-object centroids, while context $\mathbf{u}$ generally better captures RGB values (Appendix Table~\ref{tab:bridge_probe}), again separating motion-relevant information from persistent appearance.

\subsection{Reconstruction Interventions}

\begin{figure}[ht]
\centering
\includegraphics[width=1.0\textwidth]{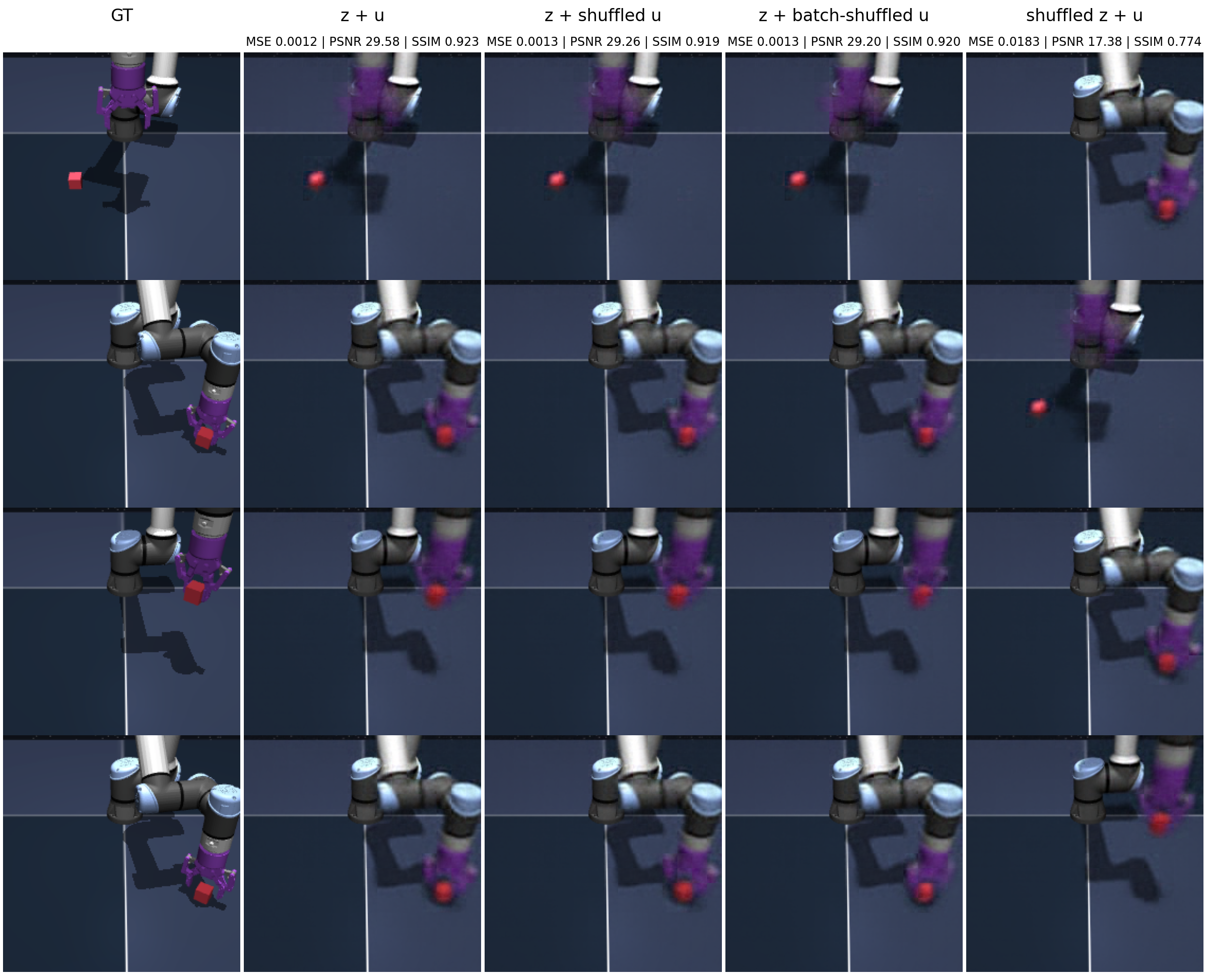}
\caption{Reconstruction interventions on Cube environment. ``Shuffled'' denotes
        within-trajectory permutation, while ``batch-shuffled'' denotes
        replacement using context codes from other trajectories. Metrics calculated on the shown sample.}
\label{fig:cube}
\end{figure}

The training-only decoder makes information allocation interpretable. We reconstruct unaltered latent pairs, shuffle either stream within a trajectory, and replace $\mathbf{u}$ across trajectories (batch shuffling).

On Cube (Fig.~\ref{fig:cube}), within-trajectory $\mathbf{u}$ shuffling causes little degradation (PSNR $29.26$ vs. $29.58$; SSIM $0.919$ vs. $0.923$), consistent with temporal invariance. Cross-trajectory $\mathbf{u}$ replacement changes slowly varying gripper and block orientations, whereas shuffling $\mathbf{z}$ changes arm, gripper, and block locations and yields the lowest quality (PSNR $17.38$, SSIM $0.774$). These interventions corroborate the probes: $\mathbf{z}$ captures kinematics and $\mathbf{u}$ slowly varying context. Appendix Figs.~\ref{fig:pusht} and~\ref{fig:tworoom} show further examples on Push-T and TwoRoom.

\section{Ablation Study}
\label{sec:ablation}

\begin{figure}[t]
\centering
\includegraphics[width=\textwidth]{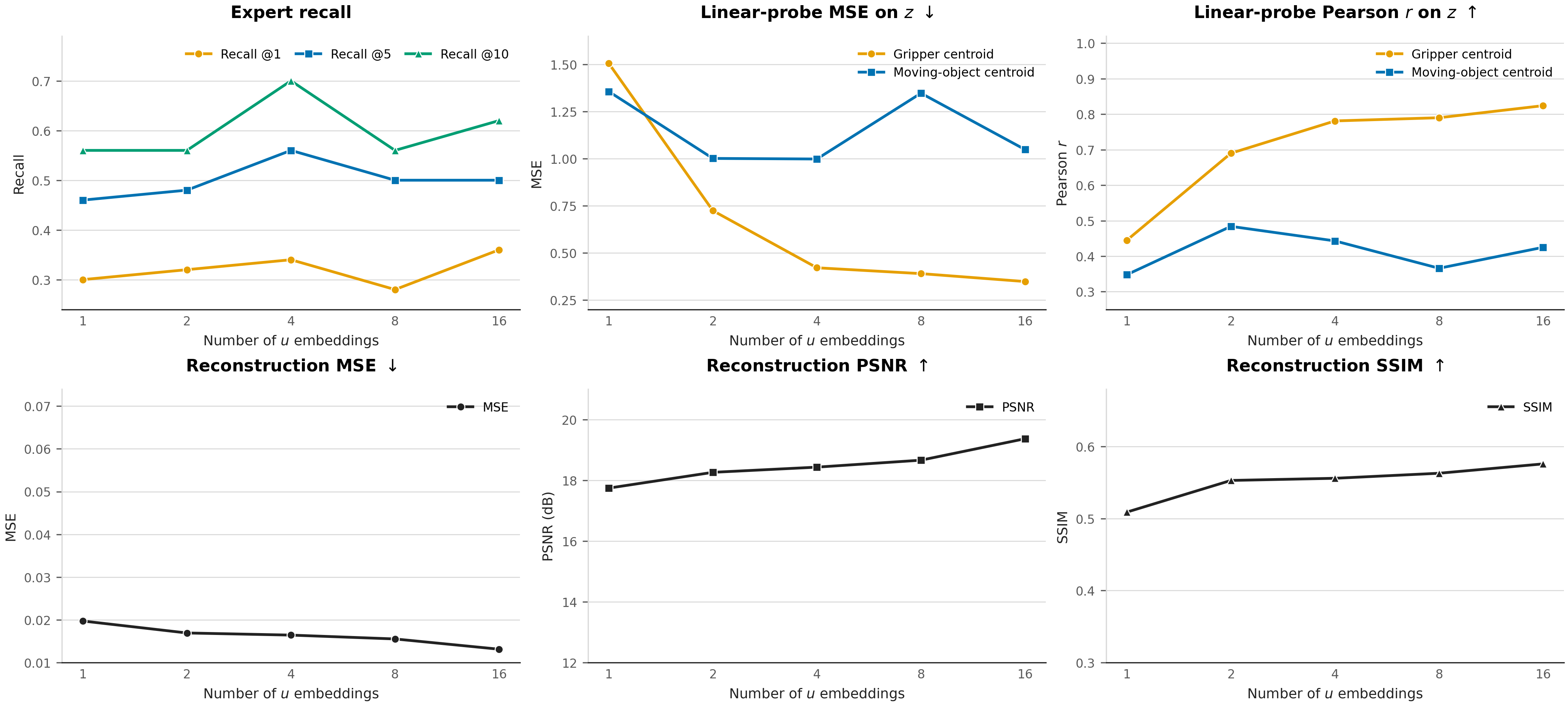}
\caption{Effect of the number of residual-context queries $Q$ on Bridge-v2. We report expert-action recall, linear probes on $\mathbf{z}$, and reconstruction MSE, PSNR, and SSIM.}
\label{fig:num_u_ablation}
\end{figure}

Varying the number of residual-context queries $Q$ from 1 to 16 on Bridge-v2 primarily affects reconstruction: MSE improves monotonically, as do PSNR and SSIM (Fig.~\ref{fig:num_u_ablation}). Planning recall and $\mathbf{z}$ probes vary less consistently, although gripper-centroid probing improves with $Q$. Thus a small context bottleneck captures useful persistent information, while additional queries mainly improve reconstruction.

Furthermore, we train LRC-JEPA and its stop-gradient variant on the push-T and Bridge-v2 environments by applying a stop gradient operation on the residual connection from $z$ to the decoder. Table~\ref{tab:sg_ablation} shows that blocking this gradient reduces Push-T success from $96\%$ to $92\%$. The effect is substantially larger on Bridge-v2: recall@1/5/10 decreases from $0.38/0.64/0.72$ to $0.10/0.28/0.32$, while the rank percentile worsens from $0.061$ to $0.286$. Thus, affected by the entanglement of contextual nuisance, next latent prediction and SIGReg regularization alone do not recover the same planning-relevant representation, particularly in the visually diverse real-world setting. The residual connection and reconstruction gradients enforce separation of context from the predictive latent, achieving better planning and control. Appendix~\ref{sec:anal_lrc} provides the corresponding theoretical analysis.

\begin{table}[H]
\caption{
    Effect of blocking reconstruction gradients at $\mathbf{z}$.
    Push-T reports planning success in percent. Bridge-v2 reports
    expert-action recall and rank percentile (pct.).
}
\label{tab:sg_ablation}
\centering
\footnotesize
\setlength{\tabcolsep}{4pt}
\begin{tabular}{lccc}
\toprule
\textbf{Model}
& \textbf{Push-T $\uparrow$}
& \textbf{Bridge recall@1/5/10 $\uparrow$}
& \textbf{Bridge rank pct. $\downarrow$} \\
\midrule
LRC-JEPA & 96.0 & 0.38/0.64/0.72 & 0.061 \\
w/ stop-gradient & 92.0 & 0.10/0.28/0.32 & 0.286 \\
\bottomrule
\end{tabular}
\end{table}

\section{Conclusion and Limitations}
LRC-JEPA resolves a capacity-allocation problem in compact JEPA world models by routing action-conditioned dynamics through a predictive state $\mathbf{z}$ and persistent visual context through reconstruction-only embeddings $\mathbf{u}$. When competing with a substantially larger pretrained model (4$\times$ for DINO-WM), LRC-JEPA demonstrates the efficiency of single-state latent planning (120$\times$ acceleration) and simultaneously realizes 38\% control performance improvement (in terms of expert action rank percentile) on the real-world Bridge-v2 dataset. Theory, physical-state probes, and reconstruction interventions consistently support the intended functional separation.

Nevertheless, the approach assumes that nuisance context is approximately stable over the training horizon; rapidly changing but uncontrollable or irrelevant factors may be routed imperfectly. The decoder and context regularizer also add training-time computation and introduce sensitivity to bottleneck capacity and loss weights, although neither is used for planning. Finally, real-world data is evaluated through offline action recovery rather than closed-loop robot execution. Testing longer horizons, dynamic visual nuisances, and real-robot control is therefore an important direction for future work.

\subsection*{AI use statement}
During the manuscript preparation, we used generative AI tools to assist with manuscript revision, review, and reference checking and formatting. We also used generative AI for code and documentation editing,
feedback on experimental methodology and result interpretation.
The authors are responsible for independently verifying AI-assisted text, code, citations, and mathematical claims and take full responsibility for the final manuscript and released artifacts.

\subsection*{Ethics statement}
This work studies representation learning for visual world models using
simulated control tasks and offline robot data. The Bridge-v2 evaluation
measures offline action ranking and does not establish the safety or
reliability of deployment on physical robots. Applying these models to
physical systems requires additional validation and appropriate operational
safeguards. Use of the datasets, third-party software, and pretrained models
is subject to their respective licenses and access conditions. 

\subsection*{Reproducibility statement}
Appendix~\ref{sec:imp_details} describes the model architectures, training settings, data
preprocessing, and evaluation protocols. 

\begingroup
\small
\raggedright
\bibliographystyle{plainnat}
\bibliography{references}

\begin{thebibliography}{41}
\providecommand{\natexlab}[1]{#1}
\providecommand{\url}[1]{\texttt{#1}}
\expandafter\ifx\csname urlstyle\endcsname\relax
  \providecommand{\doi}[1]{doi: #1}\else
  \providecommand{\doi}{doi: \begingroup \urlstyle{rm}\Url}\fi

\bibitem[Achille and Soatto(2018{\natexlab{a}})]{achille2018emergence}
Alessandro Achille and Stefano Soatto.
\newblock Emergence of invariance and disentanglement in deep representations.
\newblock \emph{Journal of Machine Learning Research}, 19\penalty0
  (50):\penalty0 1--34, 2018{\natexlab{a}}.
\newblock URL \url{https://jmlr.org/papers/v19/17-646.html}.

\bibitem[Achille and Soatto(2018{\natexlab{b}})]{achille2018information}
Alessandro Achille and Stefano Soatto.
\newblock Information dropout: Learning optimal representations through noisy
  computation.
\newblock \emph{IEEE Transactions on Pattern Analysis and Machine
  Intelligence}, 40\penalty0 (12):\penalty0 2897--2905, 2018{\natexlab{b}}.
\newblock \doi{10.1109/TPAMI.2017.2784440}.
\newblock URL \url{https://doi.org/10.1109/TPAMI.2017.2784440}.

\bibitem[Alemi et~al.(2017)Alemi, Fischer, Dillon, and Murphy]{alemi2017deep}
Alexander~A. Alemi, Ian Fischer, Joshua~V. Dillon, and Kevin Murphy.
\newblock Deep variational information bottleneck.
\newblock In \emph{International Conference on Learning Representations}, 2017.
\newblock URL \url{https://arxiv.org/abs/1612.00410}.

\bibitem[Assran et~al.(2023)Assran, Duval, Misra, Bojanowski, Vincent, Rabbat,
  LeCun, and Ballas]{assran2023ijepa}
Mahmoud Assran, Quentin Duval, Ishan Misra, Piotr Bojanowski, Pascal Vincent,
  Michael Rabbat, Yann LeCun, and Nicolas Ballas.
\newblock Self-supervised learning from images with a joint-embedding
  predictive architecture.
\newblock In \emph{Proceedings of the IEEE/CVF Conference on Computer Vision
  and Pattern Recognition}, pages 15619--15629, 2023.
\newblock URL
  \url{https://openaccess.thecvf.com/content/CVPR2023/html/Assran_Self-Supervised_Learning_From_Images_With_a_Joint-Embedding_Predictive_Architecture_CVPR_2023_paper.html}.

\bibitem[Assran et~al.(2025)Assran, Bardes, Fan, Garrido, Howes, Komeili,
  Muckley, Rizvi, Roberts, Sinha, Zholus, Arnaud, Gejji, Martin, Hogan, Dugas,
  Bojanowski, Khalidov, Labatut, Massa, Szafraniec, Krishnakumar, Li, Ma,
  Chandar, Meier, LeCun, Rabbat, and Ballas]{assran2025vjepa2}
Mido Assran, Adrien Bardes, David Fan, Quentin Garrido, Russell Howes, Mojtaba
  Komeili, Matthew Muckley, Ammar Rizvi, Claire Roberts, Koustuv Sinha, Artem
  Zholus, Sergio Arnaud, Abha Gejji, Ada Martin, Francois~Robert Hogan, Daniel
  Dugas, Piotr Bojanowski, Vasil Khalidov, Patrick Labatut, Francisco Massa,
  Marc Szafraniec, Kapil Krishnakumar, Yong Li, Xiaodong Ma, Sarath Chandar,
  Franziska Meier, Yann LeCun, Michael Rabbat, and Nicolas Ballas.
\newblock {V-JEPA 2}: Self-supervised video models enable understanding,
  prediction and planning.
\newblock \emph{arXiv preprint arXiv:2506.09985}, 2025.
\newblock URL \url{https://arxiv.org/abs/2506.09985}.

\bibitem[Balestriero and LeCun(2025)]{balestriero2025lejepa}
Randall Balestriero and Yann LeCun.
\newblock {LeJEPA}: Provable and scalable self-supervised learning without the
  heuristics.
\newblock \emph{arXiv preprint arXiv:2511.08544}, 2025.
\newblock URL \url{https://arxiv.org/abs/2511.08544}.

\bibitem[Bardes et~al.(2022)Bardes, Ponce, and LeCun]{bardes2022vicreg}
Adrien Bardes, Jean Ponce, and Yann LeCun.
\newblock {VICReg}: Variance-invariance-covariance regularization for
  self-supervised learning.
\newblock In \emph{International Conference on Learning Representations}, 2022.
\newblock URL \url{https://openreview.net/forum?id=xm6YD62D1Ub}.

\bibitem[Bardes et~al.(2023)Bardes, Ponce, and LeCun]{bardes2023mcjepa}
Adrien Bardes, Jean Ponce, and Yann LeCun.
\newblock {MC-JEPA}: A joint-embedding predictive architecture for
  self-supervised learning of motion and content features.
\newblock \emph{arXiv preprint arXiv:2307.12698}, 2023.
\newblock URL \url{https://arxiv.org/abs/2307.12698}.

\bibitem[Bardes et~al.(2024)Bardes, Garrido, Ponce, Chen, Rabbat, LeCun,
  Assran, and Ballas]{bardes2024vjepa}
Adrien Bardes, Quentin Garrido, Jean Ponce, Xinlei Chen, Michael Rabbat, Yann
  LeCun, Mido Assran, and Nicolas Ballas.
\newblock Revisiting feature prediction for learning visual representations
  from video.
\newblock \emph{Transactions on Machine Learning Research}, 2024.
\newblock URL \url{https://openreview.net/forum?id=QaCCuDfBk2}.

\bibitem[Carion et~al.(2025)Carion, Gustafson, Hu, Debnath, Hu, Suris, Ryali,
  Alwala, Khedr, Huang, Lei, Ma, Guo, Kalla, Marks, Greer, Wang, Sun,
  R{\"a}dle, Afouras, Mavroudi, Xu, Wu, Zhou, Momeni, Hazra, Ding, Vaze,
  Porcher, Li, Li, Kamath, Cheng, Doll{\'a}r, Ravi, Saenko, Zhang, and
  Feichtenhofer]{carion2025sam3}
Nicolas Carion, Laura Gustafson, Yuan-Ting Hu, Shoubhik Debnath, Ronghang Hu,
  Didac Suris, Chaitanya Ryali, Kalyan~Vasudev Alwala, Haitham Khedr, Andrew
  Huang, Jie Lei, Tengyu Ma, Baishan Guo, Arpit Kalla, Markus Marks, Joseph
  Greer, Meng Wang, Peize Sun, Roman R{\"a}dle, Triantafyllos Afouras,
  Effrosyni Mavroudi, Katherine Xu, Tsung-Han Wu, Yu~Zhou, Liliane Momeni,
  Rishi Hazra, Shuangrui Ding, Sagar Vaze, Francois Porcher, Feng Li, Siyuan
  Li, Aishwarya Kamath, Ho~Kei Cheng, Piotr Doll{\'a}r, Nikhila Ravi, Kate
  Saenko, Pengchuan Zhang, and Christoph Feichtenhofer.
\newblock {SAM 3}: Segment anything with concepts.
\newblock \emph{arXiv preprint arXiv:2511.16719}, 2025.
\newblock URL \url{https://arxiv.org/abs/2511.16719}.

\bibitem[Chen et~al.(2018)Chen, Li, Grosse, and Duvenaud]{chen2018isolating}
Ricky T.~Q. Chen, Xuechen Li, Roger Grosse, and David Duvenaud.
\newblock Isolating sources of disentanglement in variational autoencoders.
\newblock In \emph{Advances in Neural Information Processing Systems},
  volume~31, 2018.
\newblock URL \url{https://arxiv.org/abs/1802.04942}.

\bibitem[Cui et~al.(2026)Cui, Zhang, Wen, and Wang]{cui2026generalization}
Jingyi Cui, Qi~Zhang, Hongwei Wen, and Yisen Wang.
\newblock A generalization theory for {JEPA}-based world models.
\newblock \emph{arXiv preprint arXiv:2606.27014}, 2026.
\newblock URL \url{https://arxiv.org/abs/2606.27014}.

\bibitem[Denton and Birodkar(2017)]{denton2017disentangled}
Emily~L. Denton and Vighnesh Birodkar.
\newblock Unsupervised learning of disentangled representations from video.
\newblock In \emph{Advances in Neural Information Processing Systems},
  volume~30, 2017.
\newblock URL
  \url{https://proceedings.neurips.cc/paper/2017/hash/2d2ca7eedf739ef4c3800713ec482e1a-Abstract.html}.

\bibitem[Dosovitskiy et~al.(2021)Dosovitskiy, Beyer, Kolesnikov, Weissenborn,
  Zhai, Unterthiner, Dehghani, Minderer, Heigold, Gelly, Uszkoreit, and
  Houlsby]{dosovitskiy2020image}
Alexey Dosovitskiy, Lucas Beyer, Alexander Kolesnikov, Dirk Weissenborn,
  Xiaohua Zhai, Thomas Unterthiner, Mostafa Dehghani, Matthias Minderer, Georg
  Heigold, Sylvain Gelly, Jakob Uszkoreit, and Neil Houlsby.
\newblock An image is worth {16x16} words: Transformers for image recognition
  at scale.
\newblock In \emph{International Conference on Learning Representations}, 2021.
\newblock URL \url{https://arxiv.org/abs/2010.11929}.

\bibitem[Ebert et~al.(2018)Ebert, Finn, Dasari, Xie, Lee, and
  Levine]{ebert2018visualforesight}
Frederik Ebert, Chelsea Finn, Sudeep Dasari, Annie Xie, Alex Lee, and Sergey
  Levine.
\newblock Visual foresight: Model-based deep reinforcement learning for
  vision-based robotic control.
\newblock \emph{arXiv preprint arXiv:1812.00568}, 2018.
\newblock URL \url{https://arxiv.org/abs/1812.00568}.

\bibitem[Finn et~al.(2016)Finn, Goodfellow, and Levine]{finn2016unsupervised}
Chelsea Finn, Ian Goodfellow, and Sergey Levine.
\newblock Unsupervised learning for physical interaction through video
  prediction.
\newblock In \emph{Advances in Neural Information Processing Systems},
  volume~29, 2016.
\newblock URL
  \url{https://proceedings.neurips.cc/paper/2016/hash/d9d4f495e875a2e075a1a4a6e1b9770f-Abstract.html}.

\bibitem[Fu et~al.(2025)Fu, Lian, Wang, Shi, Wang, Yala, Darrell, Efros, and
  Goldberg]{fu2024crossmae}
Letian Fu, Long Lian, Renhao Wang, Baifeng Shi, Xudong Wang, Adam Yala, Trevor
  Darrell, Alexei~A. Efros, and Ken Goldberg.
\newblock Rethinking patch dependence for masked autoencoders.
\newblock \emph{Transactions on Machine Learning Research}, 2025.
\newblock URL \url{https://openreview.net/forum?id=JT2KMuo2BV}.

\bibitem[Ha and Schmidhuber(2018)]{ha2018worldmodels}
David Ha and J{\"u}rgen Schmidhuber.
\newblock World models.
\newblock \emph{arXiv preprint arXiv:1803.10122}, 2018.
\newblock URL \url{https://arxiv.org/abs/1803.10122}.

\bibitem[Hafner et~al.(2019)Hafner, Lillicrap, Fischer, Villegas, Ha, Lee, and
  Davidson]{hafner2019planet}
Danijar Hafner, Timothy Lillicrap, Ian Fischer, Ruben Villegas, David Ha,
  Honglak Lee, and James Davidson.
\newblock Learning latent dynamics for planning from pixels.
\newblock In \emph{Proceedings of the 36th International Conference on Machine
  Learning}, volume~97 of \emph{Proceedings of Machine Learning Research},
  pages 2555--2565. PMLR, 2019.
\newblock URL \url{https://proceedings.mlr.press/v97/hafner19a.html}.

\bibitem[Hafner et~al.(2025)Hafner, Pasukonis, Ba, and
  Lillicrap]{hafner2023dreamerv3}
Danijar Hafner, Jurgis Pasukonis, Jimmy Ba, and Timothy Lillicrap.
\newblock Mastering diverse control tasks through world models.
\newblock \emph{Nature}, 640:\penalty0 647--653, 2025.
\newblock \doi{10.1038/s41586-025-08744-2}.
\newblock URL \url{https://doi.org/10.1038/s41586-025-08744-2}.

\bibitem[Hansen et~al.(2024)Hansen, Su, and Wang]{hansen2024tdmpc2}
Nicklas Hansen, Hao Su, and Xiaolong Wang.
\newblock {TD-MPC2}: Scalable, robust world models for continuous control.
\newblock In \emph{International Conference on Learning Representations}, 2024.
\newblock URL \url{https://openreview.net/forum?id=Oxh5CstDJU}.

\bibitem[Hansen et~al.(2022)Hansen, Su, and Wang]{hansen2022tdmpc}
Nicklas~A. Hansen, Hao Su, and Xiaolong Wang.
\newblock Temporal difference learning for model predictive control.
\newblock In \emph{Proceedings of the 39th International Conference on Machine
  Learning}, volume 162 of \emph{Proceedings of Machine Learning Research},
  pages 8387--8406. PMLR, 2022.
\newblock URL \url{https://proceedings.mlr.press/v162/hansen22a.html}.

\bibitem[Hsieh et~al.(2018)Hsieh, Liu, Huang, Fei-Fei, and
  Niebles]{hsieh2018ddpae}
Jun-Ting Hsieh, Bingbin Liu, De-An Huang, Li~Fei-Fei, and Juan~Carlos Niebles.
\newblock Learning to decompose and disentangle representations for video
  prediction.
\newblock In \emph{Advances in Neural Information Processing Systems},
  volume~31, 2018.
\newblock URL
  \url{https://proceedings.neurips.cc/paper/2018/hash/496e05e1aea0a9c4655800e8a7b9ea28-Abstract.html}.

\bibitem[Klindt et~al.(2026)Klindt, LeCun, and Balestriero]{klindt2026lejepa}
David Klindt, Yann LeCun, and Randall Balestriero.
\newblock When does {LeJEPA} learn a world model?
\newblock \emph{arXiv preprint arXiv:2605.26379}, 2026.
\newblock URL \url{https://arxiv.org/abs/2605.26379}.

\bibitem[Liu et~al.(2026)Liu, Suo, Jin, Ping, Iwasawa, Matsuo, and
  Zhu]{liu2026temporallycentered}
Chang Liu, Fei Suo, Yanzhou Jin, Zeyu Ping, Yusuke Iwasawa, Yutaka Matsuo, and
  Yaonan Zhu.
\newblock Temporally centered {SIGReg} improves {LeWorldModel} representations
  for robot policy learning.
\newblock \emph{arXiv preprint arXiv:2607.26924}, 2026.
\newblock URL \url{https://arxiv.org/abs/2607.26924}.

\bibitem[Maes et~al.(2026)Maes, Le~Lidec, Scieur, LeCun, and
  Balestriero]{maes2026leworldmodel}
Lucas Maes, Quentin Le~Lidec, Damien Scieur, Yann LeCun, and Randall
  Balestriero.
\newblock {LeWorldModel}: Stable end-to-end joint-embedding predictive
  architecture from pixels.
\newblock \emph{arXiv preprint arXiv:2603.19312}, 2026.
\newblock URL \url{https://arxiv.org/abs/2603.19312}.

\bibitem[Nam et~al.(2026)Nam, Le~Lidec, Maes, LeCun, and
  Balestriero]{nam2026causaljepa}
Heejeong Nam, Quentin Le~Lidec, Lucas Maes, Yann LeCun, and Randall
  Balestriero.
\newblock {Causal-JEPA}: Learning world models through object-level latent
  masking.
\newblock In \emph{Proceedings of the 43rd International Conference on Machine
  Learning}, 2026.
\newblock URL \url{https://arxiv.org/abs/2602.11389}.

\bibitem[{NVIDIA}(2025)]{nvidia2025cosmos}
{NVIDIA}.
\newblock Cosmos world foundation model platform for physical {AI}.
\newblock \emph{arXiv preprint arXiv:2501.03575}, 2025.
\newblock URL \url{https://arxiv.org/abs/2501.03575}.

\bibitem[Oquab et~al.(2024)Oquab, Darcet, Moutakanni, Vo, Szafraniec, Khalidov,
  Fernandez, Haziza, Massa, El-Nouby, Assran, Ballas, Galuba, Howes, Huang, Li,
  Misra, Rabbat, Sharma, Synnaeve, Xu, J{\'e}gou, Mairal, Labatut, Joulin, and
  Bojanowski]{oquab2024dinov2}
Maxime Oquab, Timoth{\'e}e Darcet, Th{\'e}o Moutakanni, Huy~V. Vo, Marc
  Szafraniec, Vasil Khalidov, Pierre Fernandez, Daniel Haziza, Francisco Massa,
  Alaaeldin El-Nouby, Mido Assran, Nicolas Ballas, Wojciech Galuba, Russell
  Howes, Po-Yao Huang, Shang-Wen Li, Ishan Misra, Michael Rabbat, Vasu Sharma,
  Gabriel Synnaeve, Hu~Xu, Herv{\'e} J{\'e}gou, Julien Mairal, Patrick Labatut,
  Armand Joulin, and Piotr Bojanowski.
\newblock {DINOv2}: Learning robust visual features without supervision.
\newblock \emph{Transactions on Machine Learning Research}, 2024.
\newblock URL \url{https://openreview.net/forum?id=a68SUt6zFt}.

\bibitem[Pan et~al.(2022)Pan, Zhu, Wang, and Yang]{pan2022isodream}
Minting Pan, Xiangming Zhu, Yunbo Wang, and Xiaokang Yang.
\newblock {Iso-Dream}: Isolating and leveraging noncontrollable visual dynamics
  in world models.
\newblock In \emph{Advances in Neural Information Processing Systems},
  volume~35, pages 23178--23191, 2022.
\newblock URL
  \url{https://proceedings.neurips.cc/paper_files/paper/2022/hash/9316769afaaeeaad42a9e3633b14e801-Abstract-Conference.html}.

\bibitem[Peebles and Xie(2023)]{peebles2023scalable}
William Peebles and Saining Xie.
\newblock Scalable diffusion models with transformers.
\newblock In \emph{Proceedings of the IEEE/CVF International Conference on
  Computer Vision}, pages 4172--4182, 2023.
\newblock URL \url{https://arxiv.org/abs/2212.09748}.

\bibitem[Sobal et~al.(2025)Sobal, Zhang, Cho, Balestriero, Rudner, and
  LeCun]{sobal2025pldm}
Uladzislau Sobal, Wancong Zhang, Kyunghyun Cho, Randall Balestriero, Tim G.~J.
  Rudner, and Yann LeCun.
\newblock Learning from reward-free offline data: A case for planning with
  latent dynamics models.
\newblock In \emph{Advances in Neural Information Processing Systems},
  volume~38, 2025.
\newblock URL
  \url{https://proceedings.neurips.cc/paper_files/paper/2025/hash/3e7cf447f21cd11c846463affefce665-Abstract-Conference.html}.

\bibitem[Sobal et~al.(2022)Sobal, S~V, Jalagam, Carion, Cho, and
  LeCun]{sobal2022slowfeatures}
Vlad Sobal, Jyothir S~V, Siddhartha Jalagam, Nicolas Carion, Kyunghyun Cho, and
  Yann LeCun.
\newblock Joint embedding predictive architectures focus on slow features.
\newblock \emph{arXiv preprint arXiv:2211.10831}, 2022.
\newblock URL \url{https://arxiv.org/abs/2211.10831}.

\bibitem[Terver et~al.(2026)Terver, Balestriero, Dervishi, Fan, Garrido,
  Nagarajan, Sinha, Zhang, Rabbat, LeCun, and Bar]{terver2026ebjepa}
Basile Terver, Randall Balestriero, Megi Dervishi, David Fan, Quentin Garrido,
  Tushar Nagarajan, Koustuv Sinha, Wancong Zhang, Mike Rabbat, Yann LeCun, and
  Amir Bar.
\newblock A lightweight library for energy-based joint-embedding predictive
  architectures.
\newblock \emph{arXiv preprint arXiv:2602.03604}, 2026.
\newblock URL \url{https://arxiv.org/abs/2602.03604}.

\bibitem[Thil et~al.(2026)Thil, Read, Kaddah, and Doquet]{thil2026subspacejepa}
Lucas Thil, Jesse Read, Rim Kaddah, and Guillaume Doquet.
\newblock Subspace-decomposed {JEPAs}: Disentangling progression and content in
  latent world models.
\newblock \emph{arXiv preprint arXiv:2605.31111}, 2026.
\newblock URL \url{https://arxiv.org/abs/2605.31111}.

\bibitem[Tulyakov et~al.(2018)Tulyakov, Liu, Yang, and
  Kautz]{tulyakov2018mocogan}
Sergey Tulyakov, Ming-Yu Liu, Xiaodong Yang, and Jan Kautz.
\newblock {MoCoGAN}: Decomposing motion and content for video generation.
\newblock In \emph{Proceedings of the IEEE Conference on Computer Vision and
  Pattern Recognition}, pages 1526--1535, 2018.
\newblock URL
  \url{https://openaccess.thecvf.com/content_cvpr_2018/html/Tulyakov_MoCoGAN_Decomposing_Motion_CVPR_2018_paper.html}.

\bibitem[Walke et~al.(2023)Walke, Black, Zhao, Vuong, Zheng, Hansen-Estruch,
  He, Myers, Kim, Du, Lee, Fang, Finn, and Levine]{walke2023bridgedatav2}
Homer~Rich Walke, Kevin Black, Tony~Z. Zhao, Quan Vuong, Chongyi Zheng,
  Philippe Hansen-Estruch, Andre~Wang He, Vivek Myers, Moo~Jin Kim, Max Du,
  Abraham Lee, Kuan Fang, Chelsea Finn, and Sergey Levine.
\newblock {BridgeData V2}: A dataset for robot learning at scale.
\newblock In \emph{Proceedings of the 7th Conference on Robot Learning}, volume
  229 of \emph{Proceedings of Machine Learning Research}, pages 1723--1736.
  PMLR, 2023.
\newblock URL \url{https://proceedings.mlr.press/v229/walke23a.html}.

\bibitem[Wu et~al.(2021)Wu, Yao, Wang, and Long]{wu2021motionrnn}
Haixu Wu, Zhiyu Yao, Jianmin Wang, and Mingsheng Long.
\newblock {MotionRNN}: A flexible model for video prediction with
  spacetime-varying motions.
\newblock In \emph{Proceedings of the IEEE/CVF Conference on Computer Vision
  and Pattern Recognition}, pages 15435--15444, 2021.
\newblock URL
  \url{https://openaccess.thecvf.com/content/CVPR2021/html/Wu_MotionRNN_A_Flexible_Model_for_Video_Prediction_With_Spacetime-Varying_Motions_CVPR_2021_paper.html}.

\bibitem[Zhang et~al.(2021)Zhang, McAllister, Calandra, Gal, and
  Levine]{zhang2021bisimulation}
Amy Zhang, Rowan~Thomas McAllister, Roberto Calandra, Yarin Gal, and Sergey
  Levine.
\newblock Learning invariant representations for reinforcement learning without
  reconstruction.
\newblock In \emph{International Conference on Learning Representations}, 2021.
\newblock URL \url{https://openreview.net/forum?id=-2FCwDKRREu}.

\bibitem[Zhang et~al.(2026)Zhang, Guan, Zhang, Li, Zhang, and
  Li]{zhang2026controlled}
Xiangteng Zhang, Yang Guan, Bo~Zhang, Hongyang Li, Ya-Qin Zhang, and
  Shengbo~Eben Li.
\newblock On the identifiability of controlled world models.
\newblock \emph{arXiv preprint arXiv:2607.22430}, 2026.
\newblock URL \url{https://arxiv.org/abs/2607.22430}.

\bibitem[Zhou et~al.(2025)Zhou, Pan, LeCun, and Pinto]{zhou2024dinowm}
Gaoyue Zhou, Hengkai Pan, Yann LeCun, and Lerrel Pinto.
\newblock {DINO-WM}: World models on pre-trained visual features enable
  zero-shot planning.
\newblock In \emph{Proceedings of the 42nd International Conference on Machine
  Learning}, volume 267 of \emph{Proceedings of Machine Learning Research},
  pages 79115--79135. PMLR, 2025.
\newblock URL \url{https://proceedings.mlr.press/v267/zhou25t.html}.

\end{thebibliography}
\endgroup

\clearpage
\appendix

\section{Implementation Details}
\label{sec:imp_details}

\subsection{Neural Networks}
\paragraph{LRC-JEPA architecture.}
Our encoder is a ViT-Tiny trained from scratch with $12$ layers, $3$
attention heads, hidden dimension $192$, and patch size $14$. At
$224\times224$ resolution, it produces $256$ patch tokens and one class
token. Both $d_z$ and $d_u$ are $192$. The number of residual-context
queries $Q$ is varied in ablations. The predictor has $6$ causal
Transformer layers, $16$ attention heads, MLP width $2048$, and dropout
$0.1$. The reconstruction decoder has $3$ cross-attention layers,
hidden dimension $256$, $8$ heads, and MLP width $1024$. The resulting
Bridge-v2 instantiation contains approximately $21.1$ million
parameters including the decoder. However, at planning time, only the predictive state $\mathbf{z}$, action encoder, and dynamics predictor are loaded, resulting in ~5.5M active encoder parameters.

\paragraph{V-JEPA2-GAP-AC baseline.}
V-JEPA2-GAP-AC is our compact action-conditioned realization rather
than the standard V-JEPA2-AC architecture. We freeze a pretrained
V-JEPA2 ViT-L/16 encoder and process images at $256\times256$
resolution. The resulting $256$ spatial patch representations are globally averaged to obtain one
$1024$-dimensional state vector per observation. We then train an action
encoder and a six-layer causal action-conditioned predictor, using the
same three-context/one-target temporal organization as LRC-JEPA. The
predictor uses an internal width of $192$, $16$ attention heads, and MLP
width $2048$, and is trained for $10$ epochs with batch size $192$ and
learning rate $5\times10^{-5}$. This
baseline therefore tests whether a strong pretrained video
representation remains effective when converted into the same compact
global-state planning interface used by LRC-JEPA.

\subsection{Details on Capacity--complexity Stress Tests for Compact JEPA planning}
\label{sec:trade-off}
\paragraph{PCA compression.}
For each checkpoint, we independently fit PCA to 1,000 clean predictive latents $z$ sampled without replacement from the TwoRoom evaluation dataset. At evaluation time, every encoded and autoregressively predicted latent is projected onto the leading \(k\) components as \(\Pi_k(\mathbf{z})=\boldsymbol{\mu}+\mathbf{U}_k\mathbf{U}_k^\top(\mathbf{z}-\boldsymbol{\mu})\), ensuring that current, goal, and predicted representations occupy the same subspace. We evaluate \(k\in\{32,48,64,96,128,192\}\), where \(k=192\) is an exact no-op, on 50 trajectories with three evaluation seeds. The PCA basis is fixed across ranks and repetitions. Logistic trends are fitted to the binomial success counts using \(p(k)=c\,\sigma[\beta(\log_2k-\log_2\kappa)]\); shaded regions show pointwise 95\% intervals from 500 parametric-bootstrap refits. This intervention restricts the effective rank of the latent while retaining its 192-dimensional tensor shape, and therefore measures informational capacity rather than the computational cost of a natively lower-dimensional model.

\paragraph{Static-pattern perturbation.}
For each evaluation trajectory, we sample a trajectory-persistent \(8\times8\) binary grayscale pattern, interpolate it by nearest-neighbor method to \(224\times224\), and broadcast it across the RGB channels. The pattern differs across trajectories but remains fixed over time and is independent of the agent's actions. We blend the same pattern into both current and goal observations according to \(\widetilde{\mathbf{x}}_\alpha=(1-\alpha)\mathbf{x}+\alpha\mathbf{b}\), with \(\alpha\in\{0,0.05,0.10,0.15,0.20,0.30,0.40,0.50\}\). All opacity conditions use the same 50 trajectories and trajectory-specific planning seeds, with \(\alpha=0\) serving as the clean reference. We report the raw success rates with adjacent opacity conditions connected for readability.

\subsection{Details on Physical Quantity Probes}
\label{sec:probe}
\paragraph{Physical quantity probes.}
All probes are trained on frozen representations extracted from individual frames; no encoder parameters are updated. We evaluate the predictive state $\mathbf{z}$, the mean-pooled residual context $\mathbf{u}$, and their concatenation $\mathbf{z}\oplus\mathbf{u}$. For the simulated environments, we randomly sample $10{,}000$ frames and use an $80/20$ train--test split. For Bridge-v2, object properties are obtained from the extracted segmentation masks; one valid moving, static, and background object is selected per frame, retaining $1{,}862$ of $2{,}000$ labeled frames, which are split $80/20$ by episode to prevent trajectory leakage. Each target and representation is evaluated independently using ridge regression with regularization $10^{-3}$ and a two-layer MLP with hidden widths $(256,256)$, ReLU activations, and early stopping. Features and regression targets are standardized for probe fitting. We report target-normalized mean-squared error and Pearson correlation, averaging over dimensions for vector-valued quantities.

\paragraph{Speed evaluation.}
Since our eventual target is closed-loop deployment outside
datacenter-scale infrastructure, inference-speed measurements are
performed on a single consumer-class NVIDIA GeForce RTX~5090 GPU
($32$\,GB). We use the same CEM population, elite count, iteration
count, and horizon for all methods. Full planning time sums every CEM
solve required by a trajectory and includes online current/goal image
encoding, but excludes checkpoint loading, environment stepping, video
generation, and result serialization. Reported values are averaged
over $50$ trajectories.

\subsection{Implementation Details}
\label{sec:training_details}

\paragraph{Latent prediction and SIGReg.}
For the implemented one-step objective, the causal predictor consumes $\mathbf{z}_{1:T}$ and is aligned with targets $\mathbf{z}_{2:T+1}$ according to Eq.~\ref{eq:pred_loss}. The target embeddings are not stop-gradient targets, and we do not use a target encoder or exponential-moving-average encoder. Instead, following LeJEPA and LWM \cite{balestriero2025lejepa, maes2026leworldmodel}, we apply Sketched Isotropic Gaussian Regularization (SIGReg) \cite{balestriero2025lejepa} to prevent collapse and stabilize training.

SIGReg is applied independently to the batch of latent representations at each time step and averaged over time. It matches random one-dimensional projections of the latent distribution to a standard Gaussian using the Epps--Pulley characteristic-function statistic. We use 1,024 random projections and 17 trapezoidal quadrature points over $[-5,5]$.

\paragraph{Temporal context regularization.}
We use a temporal adaptation of VICReg \cite{bardes2022vicreg} to encourage the residual context to encode static or slowly varying information complementary to the predictive latent. We first flatten the $Q$ context tokens:
\begin{equation}
\mathbf{v}_{bt}
=
\operatorname{vec}(\mathbf{u}_{bt})
\in\mathbb{R}^{d_uQ},
\qquad
\overline{\mathbf{v}}_b
=
\frac{1}{T}\sum_{t=1}^{T}\mathbf{v}_{bt}.
\end{equation}
The invariance term suppresses within-clip temporal variation:
\begin{equation}
\mathcal{L}_{\mathrm{inv}}
=
\frac{1}{BTd_uQ}
\sum_{b,t}
\left|
\mathbf{v}_{bt}
-
\overline{\mathbf{v}}_b
\right|_2^2.
\label{eq:u_invariance}
\end{equation}
To prevent a constant context code, the variance and covariance terms are computed across clip-level means:
\begin{align}
\mathcal{L}_{\mathrm{var}}
&=
\frac{1}{d_uQ}
\sum_j
\max\left(
0,,
\gamma
-
\sqrt{
\operatorname{Var}_b
[\overline{v}_{b,j}]
+\epsilon
}
\right),\\
\mathcal{L}_{\mathrm{cov}}
&=
\frac{1}{d_uQ}
\sum_{j\neq k}
C_{jk}^{,2},
\qquad
C
=
\operatorname{Cov}_b
(\overline{\mathbf{v}}_b).
\end{align}
The complete context regularizer is
\begin{equation}
\mathcal{L}_{u}
=
\alpha\mathcal{L}_{\mathrm{inv}}
+
\beta\mathcal{L}_{\mathrm{var}}
+
\delta\mathcal{L}_{\mathrm{cov}},
\label{eq:u_vicreg}
\end{equation}
with $(\alpha,\beta,\delta)=(25,25,1)$, $\gamma=1$, and $\epsilon=10^{-4}$. Thus, $\mathcal{L}_{\mathrm{inv}}$ encourages stability within a trajectory, whereas $\mathcal{L}_{\mathrm{var}}$ and $\mathcal{L}_{\mathrm{cov}}$ preserve variation across trajectories and prevent dimensional collapse.

\paragraph{Patch reconstruction.}
The reconstruction loss in Eq.~\ref{eq:rec_loss} is averaged over the batch, time, patch, and pixel dimensions. In the full model, neither $\mathbf{z}_t$ nor $\mathbf{u}_t$ is detached from this loss. Reconstruction therefore directly shapes the visual encoder and both latent streams, rather than serving only as a visualization objective. The loss weights are $(\lambda_z,\lambda_u,\lambda_{\mathrm{rec}})=(0.05,0.05,0.2)$.

\paragraph{LRC-JEPA training.}
Training clips contain three context observations and one prediction
target. Consecutive visual observations are separated by five
low-level environment steps, whose actions are concatenated and
z-score normalized. Frames are normalized using ImageNet statistics. The main
simulation models are trained for $10$ epochs with batch size $128$ and
learning rate $5\times10^{-5}$; Bridge-v2 is trained for $20$ epochs
with batch size $192$ and learning rate $5\times10^{-4}$. $90/10$ training--validation split is used for all datasets.

Optimization uses AdamW with weight decay $10^{-3}$, linear warm-up
followed by cosine annealing, bfloat16 mixed-precision training, and
gradient clipping at $1.0$. The decoder has a separate optimizer
parameter group, allowing its learning rate to be varied independently;
the main setting uses the same learning rate for the decoder and the
remaining network.

\paragraph{Latent Planning.}
After training, the context encoder and reconstruction decoder are not
needed for control. Given an observed history and a candidate action
sequence, the predictor is rolled out autoregressively in the
$\mathbf{z}$ space. For an image goal $\mathbf{i}_g$, the terminal cost
is
\begin{equation}
    \mathcal{C}
    \left(
        \mathbf{a}_{t:t+K-1}
    \right)
    =
    \left\|
        \widehat{\mathbf{z}}_{t+K}
        -
        \mathbf{z}_g
    \right\|_2^2,
    \qquad
    \mathbf{z}_g
    =
    g_z\!\left(
        \operatorname{Enc}_{\theta}(\mathbf{i}_g)_{\mathrm{cls}}
    \right).
    \label{eq:planning_cost}
\end{equation}
Candidate actions are optimized using the Cross-Entropy Method (CEM). In closed-loop control, only the first part of the optimized sequence is executed before obtaining a new observation and replanning. Because planning operates on a single $d_z$-dimensional state per frame rather than on hundreds of spatial tokens, it retains the computational efficiency of compact JEPA-style world models.

\paragraph{Planning protocol.}
For online control, we perform latent-space MPC
using CEM. The planning horizon contains five
action tokens, with each token representing five low-level actions.
CEM evaluates $300$ candidate sequences for $30$ optimization
iterations and retains the best $30$ candidates as elites. Candidate
sequences are scored by the squared distance between the terminal
predicted state $\mathbf{z}$ and the encoded goal state. The controller
executes five action tokens before replanning, with an overall budget
of $50$ low-level environment steps. Current and goal observations are
encoded once per MPC solve and their representations are reused across
CEM iterations, since they are independent of the candidate actions.
Bridge-v2 uses the same latent planning objective in an offline
start--goal evaluation. The demonstrated action sequence is ranked against 300 random candidate sequences using the predicted terminal latent distance to the goal, and recall at ranks 1, 5, and 10 and normalized expert rank are reported.

\subsection{Relation to Prior Designs}
LRC-JEPA is most directly related to LWM
\cite{maes2026leworldmodel}, from which it retains compact
action-conditioned prediction and SIGReg-based end-to-end training.
LWM, however, represents each frame using only one predictive
latent and entangles control-relevant and residual contextual information.
DINO-WM \cite{zhou2024dinowm} instead predicts frozen DINOv2 spatial
patch features, making its planning-time dynamics substantially more
expensive than global-latent prediction. Standard V-JEPA2-AC
\cite{assran2025vjepa2} similarly learns a large action-conditioned
predictor over frozen, pretrained spatial feature maps, whereas our
V-JEPA2-GAP-AC realization spatially pools these features before
learning the action-conditioned dynamics.

LRC-JEPA differs from these approaches by training a compact visual
state from scratch while retaining spatial and appearance information
in a separate, reconstruction-only context bottleneck. MC-JEPA
\cite{bardes2023mcjepa} jointly learns motion and content through shared
representation and optical-flow objectives, whereas LRC-JEPA assigns
the two streams asymmetric world-model roles: only $\mathbf{z}$ is
predicted and planned through, while $\mathbf{u}$ supplies temporally
regularized residual context. The reconstruction decoder is related to cross-attention masked decoders
\cite{fu2024crossmae}, but queries the compact
$[\mathbf{z}_t;\mathbf{u}^{\mathrm{dec}}_t]$ representation rather
than the complete set of visible patch tokens.

\section{Theoretical Analysis for LWM}
\label{sec:anal_lwm}

\subsection{Definitions and assumptions}
\label{app:defs}

\begin{definition}[Optimal-representation criteria, {\citealp{achille2018emergence}}]
\label{def:props}
Let $X$ be the input, $Y$ the task, $N$ a nuisance, and $Z$ a stochastic
representation of $X$ with coordinates $Z_1,\dots,Z_d$. Then $Z$ is
\emph{sufficient} for $Y$ if $I(Y;Z)=I(Y;X)$; \emph{minimal} if it minimizes
$I(Z;X)$ among sufficient representations; \emph{invariant} to $N$ if
$I(Z;N)=0$; and \emph{disentangled} if, among minimal sufficient
representations, it minimizes
\begin{equation}
  \TC(Z)\;:=\;\sum_{i=1}^{d} H(Z_i)-H(Z)
  \;=\;\KL\!\Big(q(\zvec)\,\Big\|\,\textstyle\prod_{i=1}^{d}q(z_i)\Big)\ \ge\ 0,
  \label{eq:tc-def}
\end{equation}
which is zero if $Z_1,\dots,Z_d$ are mutually independent.
\end{definition}

\begin{remark}[Tightness residual in Theorem~\ref{thm:criterion}(a)]
\label{rem:residual}
The nuisance attains the bound up to a residual
$\epsilon:=I(Z;Y\mid N)-I(X;Y)\in[0,H(Y\mid X)]$, which is zero whenever $Y$
is a deterministic function of $X$
\citep[Prop.~3.1]{achille2018emergence}.
\end{remark}

\begin{remark}[The task variable]
\label{rem:task}
LWM has no external label. We take the training task to be next-latent
prediction, $Y=\zvec_{t+1}$, so sufficiency is the fixed-point property that
$\zvec_t$ retains what is needed to predict $\zvec_{t+1}$ given $\avec_t$.
Squared error $\Lcal_{\mathrm{pred}}$ is a surrogate for this property, not a
proof of information-theoretic sufficiency. The minimality and $\TC$ are defined on the marginal of $Z$.
\end{remark}

\begin{assumption}[Noisy surrogate encoder]
\label{ass:surrogate}
If the encoder $\enc_\theta$ is deterministic, $I(Z;X)$ as written is infinite.
We analyze the standard noisy surrogate
$\tilde{\zvec}=\enc_\theta(\ovec)+\bm{\varepsilon}$,
$\bm{\varepsilon}\sim\Ncal(0,\sigma^2 I_d)$, at fixed $\sigma>0$, and recover
the deterministic map as $\sigma\to0^{+}$
\citep{alemi2017deep,achille2018information}. Then
$I(Z;X)=H(Z)-H(\varepsilon)$. Matching $q(\zvec)$ to $\Ncal(0,I_d)$ fixes the scalar mutual information $I(Z;X)$ at its Gaussian value, but leaves unspecified how this information budget is allocated across coordinates of $X$.
\end{assumption}

\begin{assumption}[Single-factor Gaussian controlled world]
\label{ass:1gauss}
Let $s\in\Rb^n$ be the unobserved state and $X=g(s)$ with $g$ injective on the
support of $s$. Proposition~\ref{prop:LeWorldModel}(b) invokes the setting in which
action-conditioned latent prediction plus an isotropic-Gaussian constraint
identifies $s$ up to $O(n)$~\citep{klindt2026lejepa,zhang2026controlled}:
$s\sim\Ncal(0,I_n)$; stationary linear-Gaussian dynamics
$s_{t+1}=As_t+B\avec_t+\xi_t$ with $s_{t+1}\sim\Ncal(0,I_n)$; jointly Gaussian
$(s_t,\avec_t)$; spectral separation of the predictable signal; and
non-degenerate conditional action variation given $s_t$. 
\end{assumption}

\begin{assumption}[Population optimality and realizability]
\label{ass:sigreg}
The deterministic predictive encoder has dimension $d_z=d_s$.
The encoder and predictor jointly attain a global minimum
of the population one-step LWM objective at which the ideal
population SIGReg loss vanishes.
The encoder class contains a map satisfying
$f(g(s))=Qs$ for some $Q\in O(d_s)$.
The predictor is continuous, and its class can realize the
conditional mean $\mathbb{E}[Z_{t+1}\mid Z_t,a_t]$
for every admissible encoder.
\end{assumption}

\subsection{The decomposition lemma}
\label{app:decomp}

\begin{lemma}[Exact information decomposition; {\citealp{chen2018isolating}},
cf.\ {\citealp{achille2018emergence,achille2018information}}]
\label{lem:decomp}
Let $k$ index a training set of size $K$ with $p(k)=1/K$, $q(\zvec\mid k)$ the
(surrogate) encoder, $q(\zvec)=\sum_k q(\zvec\mid k)p(k)$ the aggregate, and
$p(\zvec)=\prod_i p(z_i)$ a factorized prior. Then
\begin{equation}
\begin{aligned}
  \mathbb{E}_{p(k)}\!\big[\KL(q(\zvec\mid k)\,\|\,p(\zvec))\big]
  &= \underbrace{I_q(Z;k)}_{\text{\rm (i) rate / minimality}}
   + \underbrace{\TC(Z)}_{\text{\rm (ii) total correlation}} \\
  &\quad + \underbrace{\sum_{i=1}^{d}\KL\!\big(q(z_i)\,\|\,p(z_i)\big)}_{\text{\rm (iii) coordinate-wise matching}} ,
\end{aligned}
  \label{eq:decomp}
\end{equation}
an exact split into three non-negative terms. Term \textup{(i)} is the
index-code mutual information, the empirical counterpart of $I(Z;X)$;
\textup{(ii)} is \eqref{eq:tc-def}; \textup{(iii)} depends only on coordinate
marginals. The three are separately controllable: \textup{(iii)} can vanish
while \textup{(i)} takes any compatible value and while \textup{(ii)} remains
positive (correlated or copula-dependent coordinates with Gaussian margins).
\end{lemma}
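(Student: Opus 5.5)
The statement to be proved is Lemma~\ref{lem:decomp}, the exact split of the averaged KL into rate, total correlation, and coordinate-wise matching terms. The proof is an algebraic identity on log-density ratios, followed by a short argument that the three pieces can be controlled separately.

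\textbf{Step 1 (telescoping the log-ratio).} For each $k$ and $\zvec$, insert the aggregate $q(\zvec)$ and the product of marginals $\prod_i q(z_i)$:
\begin{equation*}
\log\frac{q(\zvec\mid k)}{p(\zvec)}
=\log\frac{q(\zvec\mid k)}{q(\zvec)}
+\log\frac{q(\zvec)}{\prod_i q(z_i)}
+\sum_{i=1}^{d}\log\frac{q(z_i)}{p(z_i)} .
\end{equation*}
The last sum uses the factorization $p(\zvec)=\prod_i p(z_i)$.

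\textbf{Step 2 (taking expectations).} Take the expectation under the joint $q(\zvec,k)=q(\zvec\mid k)p(k)$.
\begin{itemize}
\item The left side becomes $\mathbb{E}_{p(k)}\KL(q(\zvec\mid k)\,\|\,p(\zvec))$.
\item The first term on the right is by definition $I_q(Z;k)$.
\item The second and third terms depend only on $\zvec$. Their expectations under the joint therefore reduce to expectations under the marginal $q(\zvec)=\sum_k q(\zvec\mid k)p(k)$.
\item The second term gives $\KL\bigl(q(\zvec)\,\|\,\prod_i q(z_i)\bigr)=\TC(Z)$, matching \eqref{eq:tc-def}.
\item For the third term, each summand depends only on $z_i$, so marginalizing gives $\KL(q(z_i)\,\|\,p(z_i))$.
\end{itemize}
Each of the three terms is a mutual information or a KL divergence, hence non-negative.

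\textbf{Step 3 (integrability).} Under the Gaussian-noise surrogate of Assumption~\ref{ass:surrogate}, each $q(\zvec\mid k)$ is Gaussian and $q(\zvec)$ is a finite Gaussian mixture. All densities are therefore strictly positive and smooth. Provided the left side is finite, every term is finite, so splitting the expectation is legitimate. If the left side is infinite, I would state the identity in $[0,\infty]$, which is still valid because all three terms are non-negative.

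\textbf{Step 4 (separate controllability).} Here I expect the only real care, because it requires explicit constructions rather than algebra.
\begin{itemize}
\item \emph{(iii) vanishes while (ii) is positive.} Take $p=\Ncal(0,I_d)$ and let $q(\zvec)$ have standard-normal margins but a non-Gaussian copula. A concrete choice is $z_2=\xi z_1$ with $\xi=\pm1$ a fair coin independent of $z_1$. Then both margins are $\Ncal(0,1)$, so term (iii) is zero. However $|z_2|=|z_1|$, so the coordinates are dependent and $\TC>0$.
\item \emph{(iii) vanishes while (i) varies.} Vary the noise level $\sigma$. For instance, take encoder means $\mu_k$ with empirical covariance $(1-\sigma^2)I$, so the aggregate is approximately Gaussian with unit margins. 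Then $I_q(Z;k)$ changes with $\sigma$ while the margins stay fixed.
\end{itemize}
Because the aggregate of finitely many Gaussians is only approximately $\Ncal(0,I)$, I would phrase this claim as a limit as $K\to\infty$, or simply as compatibility of values, rather than exact attainment.
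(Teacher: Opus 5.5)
Your proposal is correct and follows the paper's proof. The paper likewise inserts $q(\zvec)$ and $\prod_i q(z_i)$ into the log-ratio, takes the expectation under $q(\zvec,k)=q(\zvec\mid k)p(k)$, and identifies the three telescoped pieces as $I_q(Z;k)$, $\TC(Z)$, and $\sum_i \KL(q(z_i)\,\|\,p(z_i))$, each of which is a KL divergence and hence nonnegative. The paper justifies separate controllability in one line, by noting that the terms are functionals of different parts of $q$; your explicit witnesses, and your caveat that a finite mixture of $\Ncal(\mu_k,\sigma^2 I)$ components only approximates $\Ncal(0,I)$, make that part more careful than the paper's own argument.
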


\begin{proof}
Write $q(\zvec,k)=q(\zvec\mid k)p(k)$ and $q(\zvec)=\sum_k q(\zvec\mid k)p(k)$.
Insert $q(\zvec)$ and $\prod_i q(z_i)$ inside the logarithm:
{\footnotesize
\begin{align*}
\mathbb{E}_{p(k)}\!\big[\KL(q(\zvec\mid k)\,\|\,p(\zvec))\big]
&= \mathbb{E}_{q(\zvec,k)}\!\left[\log\frac{q(\zvec\mid k)}{p(\zvec)}\right] \\
&= \mathbb{E}_{q(\zvec,k)}\!\left[
      \log\frac{q(\zvec,k)}{q(\zvec)p(k)}
    + \log\frac{q(\zvec)}{\prod_i q(z_i)}
    + \log\frac{\prod_i q(z_i)}{\prod_i p(z_i)}\right] \\
&= \underbrace{\KL\!\big(q(\zvec,k)\,\|\,q(\zvec)p(k)\big)}_{{\rm (i)}}
 + \underbrace{\KL\!\Big(q(\zvec)\,\Big\|\,\textstyle\prod_i q(z_i)\Big)}_{{\rm (ii)}} \\
&\quad + \underbrace{\sum_i \KL\!\big(q(z_i)\,\|\,p(z_i)\big)}_{{\rm (iii)}}.
\end{align*}}
The factors telescope, so the identity is exact; each summand is a KL, hence
non-negative. Over sufficiently expressive stochastic encoder famillies, separate controllability follows because the terms are
functionals of $q(\zvec,k)$, of the dependence in $q(\zvec)$, and of the
coordinate marginals, respectively.
\end{proof}

\begin{remark}[What SIGReg targets]
\label{rem:sigreg-vs-kl}
LWM does \emph{not} optimize the left-hand side of \eqref{eq:decomp}. That KL
includes the rate and is the factorized-prior regularizer of
\citet{achille2018information}. SIGReg is a statistic of the aggregate
$q(\zvec)$: Epps--Pulley tests of random projections against $\Ncal(0,1)$
\citep{balestriero2025lejepa,maes2026leworldmodel}. Its population minimizer
$q(\zvec)=\Ncal(0,I_d)$ makes \textup{(ii)} and \textup{(iii)} vanish together
and leaves \textup{(i)} unconstrained. 
\end{remark}

\subsection{Proof of Proposition~\ref{prop:LeWorldModel}(Sufficiency)}
\label{app:prop-indep}
For LWM, SIGReg averages the Epps--Pulley normality statistic over random one-dimensional projections. The SIGReg objective is zero when almost every projection of $Z$ is standard normal; continuity of the characteristic function extends this to all directions, and the Cram'er--Wold theorem then implies $q(\zvec)=\Ncal(0,I_d)$. Hence the learned latent variables are mutually independent in any orthonormal basis. Under Assumptions~\ref{ass:1gauss} and~\ref{ass:sigreg},
the deterministic predictive representation satisfies
$Z_t=Qs_t$ almost surely for some orthogonal transformation $Q\in O(d_s)$.
Consequently, $s_t=Q^\top Z_t$, establishing state sufficiency.\hfill$\square$

\subsection{Proof of Proposition~\ref{prop:LeWorldModel}(Minimality)}
\label{app:prop-ident}
The objective $\Lcal_{\mathrm{LeWM}}$ targets unsupervised latent representation learning through $\Lcal_{\mathrm{pred}}$ and SIGReg. While $\Lcal_{\mathrm{pred}}$ encourages retention of transition-relevant information, SIGReg depends only on the aggregate $q(\zvec)$ and is therefore insensitive to conditional changes that alter $I(Z;X)$; its Gaussian target primarily prevents collapse. Hence, $\Lcal_{\mathrm{LeWM}}$ neither explicitly targets nor enforces minimality.
\hfill$\square$

\section{Theoretical Analysis for Proposed Method}
\label{sec:anal_lrc}
\subsection{Optimality of LRC-JEPA}
\label{app:lrc}

\begin{assumption}[Two-factor Gaussian controlled world]
\label{ass:2gauss}
Observations factor as $X_t=g(s_t,v)$ with $g$ injective on the joint
support. The dynamic state $s_t\in\Rb^{d_s}$ follows linear-Gaussian
controlled dynamics, with $s_t\sim\Ncal(0,I_{d_s})$, spectral separation
such that the predictable signal spans $s$, and action excitation. The
appearance factor $v\in\Rb^{d_v}$ is constant
within each trajectory, independent of $(s_{1:T},\avec_{1:T})$, and
non-degenerate across trajectories; $V$ is a nuisance for the prediction
task in the sense of Theorem~\ref{thm:criterion}, $I(Y;V)=0$.
\end{assumption}

\begin{assumption}[Global minimizer of $\Lcal_{LRC}$]
\label{ass:lrc}
Suppose the representation rate is minimum at convergence, enforced by architectural capacity controls $d_z=d_s, Qd_u=d_v$. At the global optimum of $\Lcal_{LRC}$, $\Lcal_u=0$ and the context embedding $u_t$ is constant within a trajectory while
retaining cross-trajectory variance (the temporal-VICReg population
minimizer).
\end{assumption}

With above anlaysis, we are ready to prove the our key theorem: 
\begin{proof}[Proof of Theorem~\ref{thm:lrc}]
We first show joint realizability of all loss terms, then use the context and reconstruction constraints to separate nuisance and state information, and finally invoke SIGReg and identifiability to establish invariance, minimality, coordinate disentanglement, and state recovery.

Consider $\zvec_t=Qs_t$ and $\uvec_t=\rho(v)$, where $Q\in O(n)$ and $\rho$ is injective and realizable by the context head. With predictor $\widehat{\zvec}_{t+1}=QAQ^\top\zvec_t+QB\avec_t$ and a decoder recovering $(s_t,v)$ from $(\zvec_t,\uvec_t)$ before applying $g$, the objective simultaneously attains the minimum prediction error for $\Lcal_{\mathrm{pred}}$, $\Lcal_{\mathrm{SIG}}=0$, the minimum of $\Lcal_u$, and $\Lcal_{\mathrm{rec}}=0$. Hence every global minimizer attains the minimum of each term individually.

\paragraph{Step 1: Invariance restricts $U_t$ to shared context.} In particular, $\Lcal_{\mathrm{SIG}}=0$ implies $q(\zvec)=\Ncal(0,I_n)$ and $\TC(Z)=0$ by Appendix~\ref{app:prop-indep}, \textit{proving~(iv) disentanglement in the Theorem~\ref{thm:lrc}}. Minimizing $\Lcal_u$ makes $\uvec_t$ a.s.\ constant along each trajectory. Conditional on almost every context $v$, the stationary state process is ergodic, $\omega(g(\cdot,v))$ is a.e.\ constant in $s$; thus $\uvec_t=\rho(v)$. 

\paragraph{Step 2: Compactness makes sufficiency.}
Since $U$ is a deterministic function of $V$,
\begin{equation}
    H(U)
    =
    I(U;V)
    =
    H(V)-H(V\mid U).
    \label{eq:u-rate}
\end{equation}

Perfect reconstruction gives
\begin{equation}
    H(V,S_t\mid U,Z_t)=0.
\end{equation}
Therefore,
\begin{align}
    H(V,S_t\mid U)
    &=
    I(V,S_t;Z_t\mid U)\\
    &\leq H(Z_t\mid U)\\
    &\leq H(Z_t).
\end{align}
Because $S_t$ is independent of $(V,U)$,
\begin{equation}
    H(V,S_t\mid U)
    =
    H(V\mid U)+H(S_t).
\end{equation}
Thus,
\begin{equation}
    H(Z_t)
    \geq
    H(V\mid U)+H(S_t).
    \label{eq:z-rate-lower-bound}
\end{equation}

Combining \eqref{eq:u-rate} and
\eqref{eq:z-rate-lower-bound} yields
\begin{align}
    \mathcal{R}
    &=
    H(U)+\sum_{t=1}^{T}H(Z_t)\\
    &\geq
    H(V)
    +
    \sum_{t=1}^{T}H(S_t)
    +
    (T-1)H(V\mid U).
    \label{eq:rate-lower-bound}
\end{align}

Since $T>1$, every
minimum-rate solution of \eqref{eq:rate-lower-bound} must satisfy
\begin{equation}
    H(V\mid U)=0.
    \label{u-sufficiency}
\end{equation}
Hence $U$ is sufficient for $V$.

Exact reconstruction then gives $D_\psi(\zvec_t,\rho(v))=X_t$ a.s., and injectivity of $g$ implies $s_t=m(\zvec_t,\rho(v))$ for some measurable $m$. Therefore the state must be recoverable from $\zvec_t$ once $v$ is fixed. Since $U_t$ is sufficient for $V$, $Z_t$ must be sufficient for $S_t$ conditioned on $U_t$, \textit{proving~(i) sufficiency in the Theorem~\ref{thm:lrc}}.

\paragraph{Step 3: Sufficiency leads to minimality and invariance} Combining this routing constraint with $s\perp v$, $s\sim\Ncal(0,I_n)$, $q(\zvec)=\Ncal(0,I_n)$, and Assumption~\ref{ass:surrogate} yields $I(Z;V)=0$, \textit{proving~(iii) invariance in the Theorem~\ref{thm:lrc}}. Consequently, $\zvec_t$ is a.e.\ a function of $s_t$ alone and $I(Z;X)=I(Z;S)$. Since the predictable signal spans $s$ by Assumption~\ref{ass:lrc}, every prediction-sufficient representation $Z'$ satisfies $I(Z';X)\ge I_\sigma(S;X)$, while $Z$ attains this lower bound; hence $Z$ is minimal, \textit{proving~(ii) minimality in the Theorem~\ref{thm:lrc}}.

In fact, $\zvec$ is an isotropic Gaussian representation of $s$ that minimizes next-latent MSE under the assumed linear-Gaussian controlled dynamics. By \cite{klindt2026lejepa} and \cite{zhang2026controlled}, $\zvec_t$ is equivalent to $\mathbf{s}_t$ up to an orthogonal transformation.
\end{proof}

\section{Additional Results}

We assess physical information accessible from LRC-JEPA's and LWM's predictive latent $\zvec$ using ridge regression and MLP probes. Probes are trained on frozen, converged encoder outputs and evaluated on held-out data. More details on probe fitting are provided in Appendix \ref{sec:imp_details}
. Table \ref{tab:pusht_probe_comparison} and \ref{tab:tworoom_probe_comparison} report probing results on the two simulated environments, Push-T and TwoRoom, where the ground truth quantities are provided by the simulator. Table \ref{tab:cube_probe_z_plus_u} shows probing results with the concatenation of the two embeddings ($\zvec$ and $\uvec$) on Cube environment. Table \ref{tab:bridge_probe} and \ref{tab:bridge_functional_separation_z_plus_u} further analyze LRC-JEPA's predictive latents and context embeddings on the real-world Bridge-v2 dataset, where $Q=2$ embeddings are used for context representation.

\begin{table}[H]
\caption{Physical quantity probes on the simulated Push-T environment comparing LRC-JEPA and LWM. Each entry reports the mean-squared error (MSE; lower is better) and the Pearson correlation coefficient ($r$; higher is better). Both probe heads are fitted using the latent representation $z$.}
\label{tab:pusht_probe_comparison}
\small
\centering
\setlength{\tabcolsep}{6pt}
\begin{tabular}{llcccc}
\toprule
\multirow{2}{*}{\textbf{Quantity}}
& \multirow{2}{*}{\textbf{Model}}
& \multicolumn{2}{c}{\textbf{$z$ Linear}}
& \multicolumn{2}{c}{\textbf{$z$ MLP}} \\
\cmidrule(lr){3-4}\cmidrule(lr){5-6}
& & \textbf{MSE $\downarrow$} & \textbf{$r \uparrow$}
  & \textbf{MSE $\downarrow$} & \textbf{$r \uparrow$} \\
\midrule

\multirow{2}{*}{Agent position}
& LRC-JEPA & 0.066 & 0.967 & 0.022 & 0.989 \\
& LWM      & 0.044 & 0.978 & 0.018 & 0.991 \\
\addlinespace

\multirow{2}{*}{Block position}
& LRC-JEPA & 0.021 & 0.989 & 0.005 & 0.997 \\
& LWM      & 0.021 & 0.990 & 0.006 & 0.997 \\
\addlinespace

\multirow{2}{*}{Block angle}
& LRC-JEPA & 0.175 & 0.905 & 0.049 & 0.974 \\
& LWM      & 0.184 & 0.900 & 0.055 & 0.971 \\
\bottomrule
\end{tabular}
\end{table}

\begin{table}[H]
\caption{Physical quantity probes on the simulated TwoRoom environment comparing LRC-JEPA and LWM. Each entry reports the mean-squared error (MSE; lower is better) and the Pearson correlation coefficient ($r$; higher is better). Both probe heads are fitted using the latent representation $z$.}
\label{tab:tworoom_probe_comparison}
\small
\centering
\setlength{\tabcolsep}{6pt}
\begin{tabular}{llcccc}
\toprule
\multirow{2}{*}{\textbf{Quantity}}
& \multirow{2}{*}{\textbf{Model}}
& \multicolumn{2}{c}{\textbf{$z$ Linear}}
& \multicolumn{2}{c}{\textbf{$z$ MLP}} \\
\cmidrule(lr){3-4}\cmidrule(lr){5-6}
& & \textbf{MSE $\downarrow$} & \textbf{$r \uparrow$}
  & \textbf{MSE $\downarrow$} & \textbf{$r \uparrow$} \\
\midrule

\multirow{2}{*}{Agent position}
& LRC-JEPA & 0.002 & 0.999 & 0.000 & 1.000 \\
& LWM      & 0.008 & 0.996 & 0.001 & 1.000 \\
\bottomrule
\end{tabular}
\end{table}

\begin{table}[H]
\caption{Physical quantity probes using the concatenated $\zvec\oplus \uvec$ representation of LRC-JEPA on the Cube environment. Each entry reports MSE / Pearson correlation $r$ (lower / higher is better). Here, $\oplus$ denotes concatenation.}
\label{tab:cube_probe_z_plus_u}
\centering
\small
\setlength{\tabcolsep}{4.5pt}
\renewcommand{\arraystretch}{1.05}
\begin{tabular}{@{}lcc@{}}
\toprule
\textbf{Quantity}
& \textbf{Linear}
& \textbf{MLP} \\
\midrule

Joint position
& 0.272 / 0.731
& 0.306 / 0.711 \\

Joint velocity
& 0.746 / 0.397
& 1.002 / 0.088 \\

End-effector position
& 0.009 / 0.996
& 0.019 / 0.991 \\

End-effector yaw
& 0.891 / 0.323
& 0.923 / 0.299 \\

Gripper
& 0.058 / 0.970
& 0.060 / 0.969 \\

Block position
& 0.004 / 0.998
& 0.011 / 0.995 \\

Block quaternion
& 0.673 / 0.160
& 0.711 / 0.062 \\

Block yaw
& 0.989 / 0.173
& 1.031 / 0.150 \\
\bottomrule
\end{tabular}
\end{table}

\begin{table}[H]
\centering
\small
\setlength{\tabcolsep}{4pt}
\caption{Object-centric physical quantity probes using $\zvec$ and mean-pooled ($Q=2$) $\uvec$ on Bridge-v2. Each entry reports MSE / Pearson correlation $r$. One valid moving, static, and background object is selected per frame. Bold marks the better representation for each probe type and metric.}
\label{tab:bridge_probe}
\begin{tabular}{@{}lcccc@{}}
\toprule
& \multicolumn{2}{c}{$z$} & \multicolumn{2}{c}{$u$} \\
\cmidrule(lr){2-3}\cmidrule(lr){4-5}
Probe target & Linear & MLP & Linear & MLP \\
\midrule
Gripper centroid
& \textbf{0.724 / 0.690} & \textbf{0.604 / 0.735}
& 1.193 / 0.321 & 0.858 / 0.402 \\
Moving-object centroid
& \textbf{1.001 / 0.484} & \textbf{0.879 / 0.473}
& 1.528 / 0.225 & 0.995 / 0.323 \\
Moving-object RGB
& 2.440 / 0.131 & \textbf{1.141 / 0.269}
& \textbf{1.914 / 0.208} & 1.179 / 0.265 \\
Static-object centroid
& 1.841 / \textbf{0.252} & 1.277 / 0.259
& \textbf{1.508} / 0.246 & \textbf{1.239 / 0.267} \\
Static-object RGB
& 2.324 / 0.036 & 1.208 / 0.326
& \textbf{2.035 / 0.137} & \textbf{0.999 / 0.413} \\
Background centroid
& 1.241 / \textbf{0.425} & \textbf{0.554 / 0.609}
& \textbf{1.211} / 0.337 & 0.684 / 0.489 \\
\bottomrule
\end{tabular}
\end{table}

\begin{table}[H]
\centering
\small
\setlength{\tabcolsep}{4pt}
\caption{
Object-centric physical quantity probes using the concatenated $\zvec\oplus \uvec$ representation on Bridge-v2. Each entry reports MSE / Pearson correlation $r$ (lower / higher is better) for linear and MLP probes. Here, $\uvec$ is mean-pooled ($Q=2$) before concatenation. One valid moving, static, and background object is selected per frame.
}
\label{tab:bridge_functional_separation_z_plus_u}
\begin{tabular}{@{}lcc@{}}
\toprule
Probe target
& Linear
& MLP \\
\midrule

Gripper centroid
& 10.829 / 0.299
& 0.703 / 0.690 \\

Moving-object centroid
& 15.260 / 0.144
& 0.888 / 0.460 \\

Moving-object RGB
& 13.374 / 0.165
& 1.277 / 0.258 \\

Static-object centroid
& 13.174 / $-0.018$
& 1.219 / 0.289 \\

Static-object RGB
& 11.721 / 0.204
& 1.200 / 0.347 \\

Background centroid
& 4.750 / 0.156
& 0.598 / 0.571 \\

\bottomrule
\end{tabular}
\end{table}

\subsection{Reconstruction Visualization}
\begin{figure}[H]
\centering
\includegraphics[width=0.8\linewidth]{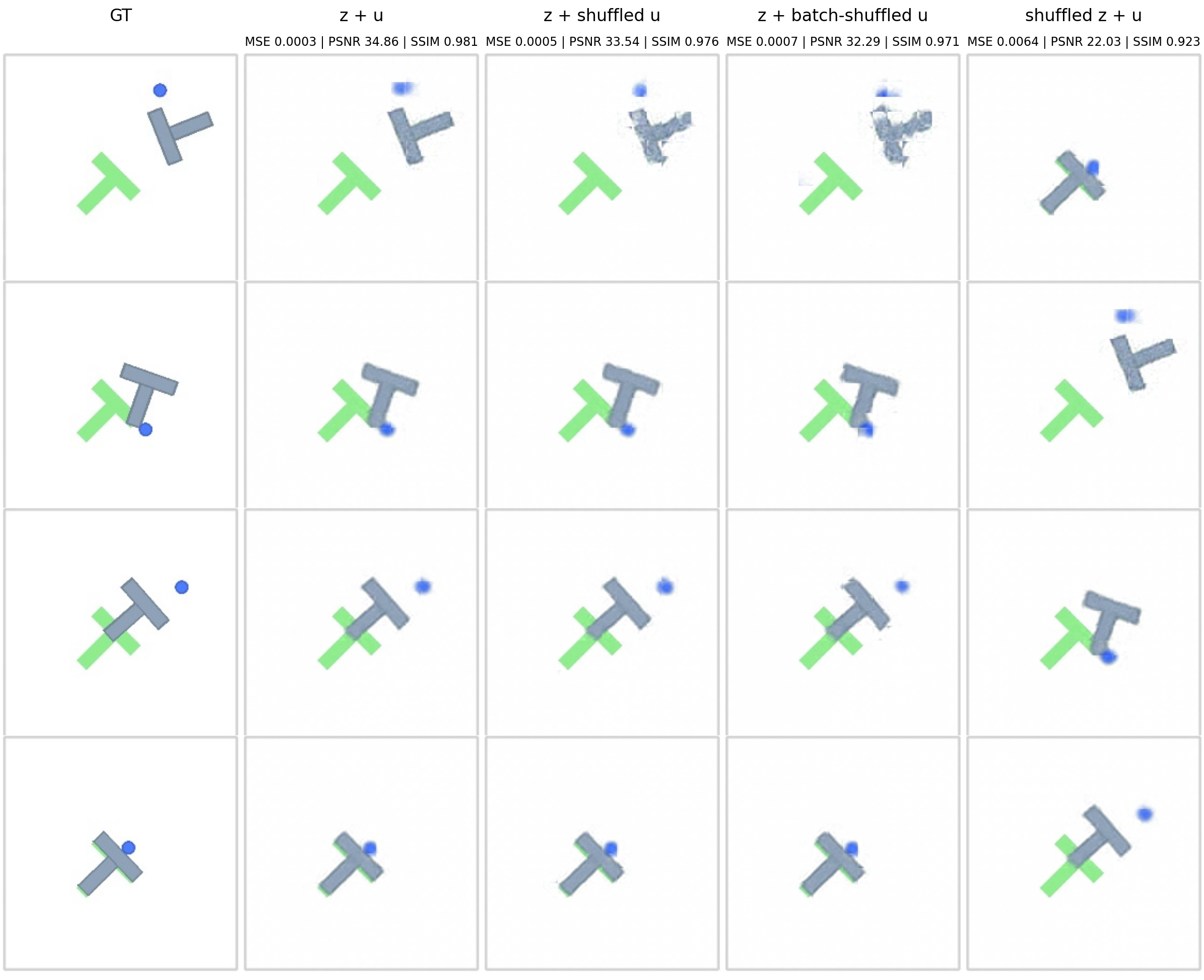}
\caption{Push-T reconstructions. ``Shuffled'' permutes frames within a trajectory; ``batch-shuffled'' exchanges context across trajectories.}
\label{fig:pusht}
\end{figure}

\begin{figure}[H]
\centering
\includegraphics[width=0.8\linewidth]{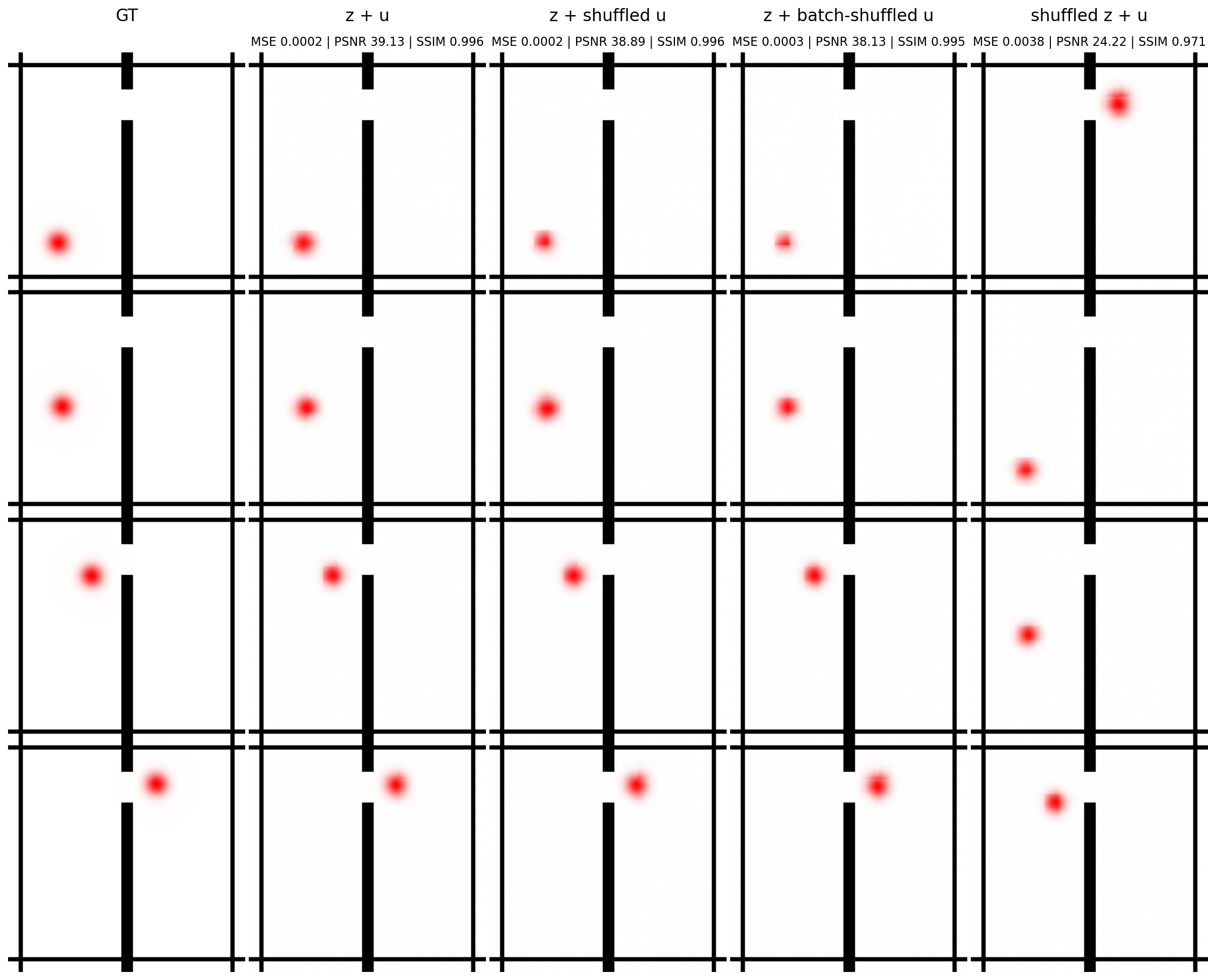}
\caption{TwoRoom reconstructions under the same within-trajectory and across-trajectory interventions.}
\label{fig:tworoom}
\end{figure}

\end{document}